\newtheorem{theorem}{Theorem}
\newtheorem{definition}{Definition}
\newtheorem{lemma}{Lemma}
\newtheorem{corollary}{Corollary}
\newcommand{\algorithmfootnote}[2][\footnotesize]{%
	\let\old@algocf@finish\@algocf@finish
	\def\@algocf@finish{\old@algocf@finish
		\leavevmode\rlap{\begin{minipage}{\linewidth}
				#1#2
		\end{minipage}}%
	}%
}
\title{IGANI: Iterative Generative Adversarial Networks for Imputation with Application to Traffic Data}
\author{ Amir Kazemi\\
	Department of Civil and Environmental Engineering\\ University of Illinois at Urbana-Champaign\\
	Urbana, IL 61801, USA.\\
	\texttt{kazemi2@illinois.edu} \\
	\And
	Hadi Meidani\thanks{Corresponding author}\\
	Department of Civil and Environmental Engineering\\ University of Illinois at Urbana-Champaign\\
	Urbana, IL 61801, USA.\\
	\texttt{meidani@illinois.edu} \\
}
\begin{document}
\maketitle

\begin{abstract}
	Increasing use of sensor data in intelligent transportation systems calls for accurate imputation algorithms that can enable reliable traffic management in the occasional absence of data. As one of the effective imputation approaches, generative adversarial networks (GANs) are implicit generative models that can be used for data imputation, which is formulated as an unsupervised learning problem. This work introduces a novel iterative GAN architecture, called Iterative Generative Adversarial Networks for Imputation (IGANI), for data imputation. IGANI imputes data in two steps and maintains the invertibility of the generative imputer, which will be shown to be a sufficient condition for the convergence of the proposed GAN-based imputation. The performance of our proposed method is evaluated on (1) the imputation of traffic speed data collected in the city of Guangzhou in China, and the training of  short-term traffic prediction models using imputed data, and (2) the imputation of multi-variable traffic data of highways in Portland-Vancouver metropolitan region which includes volume, occupancy, and speed with different missing rates for each of them. It is shown that our proposed algorithm mostly produces more accurate results compared to those of previous GAN-based imputation architectures.
\end{abstract}

\keywords{Generative adversarial networks (GAN) \and Missing Data \and Imputation \and Invertible Neural Networks (INN).}

\section{Introduction}
\label{sec:introduction}
Traffic data is generated faster than before as intelligent transportation systems (ITS) develop.
Various and numerous operating sensors from road-side cameras to vehicles equipped with GPS tracking devices underscore the big-data features of traffic records.
Traffic management and control highly depend on the collected data which is usually incomplete because of sensor or transmission failures\citep{li2014missing,wu2019imputation}.
Imputation of traffic data is therefore crucial for an informative presentation of them as well as for training prediction models based on them.
The choice of method for incomplete data analysis is of critical importance as it directly affects the conclusion validity.
The methods for handling missing data ranges from naive deletion of incomplete samples to modern machine learning techniques for imputation.
The propriety of a method for missing data analysis depends on the missing rate, missingness mechanism, data type and size, and the acceptable accuracy of imputation or of downstream classification and regression tasks based on the imputed data.
\citep{little2019statistical}.

Myriads of techniques have been proposed for the imputation of missing data which are categorized into statistical and machine-learning based techniques as well as those which are tailored for traffic data to address their spatial-temporal correlations.
In the past decade, statistical techniques including mean/mode, linear regression, least square, and expectation maximization have been outperformed by machine-learning based techniques with respect to accuracy and applicability to different missing mechanisms.
These statistical methods, such as the mean/mode method,  are now being implemented to rather serve as baseline for comparative studies \citep{lin2020missing,baraldi2010introduction,murray2018multiple}.
Major machine learning techniques for data imputation include, but are not limited to, clustering-based methods (e.g. KNN and K-means) \citep{zhang2012nearest,beretta2016nearest}, Support Vector Machine (SVM), Denoising Autoencoders (DAEs) \citep{costa2018missing}, and Generative Adversarial Networks (GANs) \citep{pmlrv80yoon18a,li2018learning,kachuee2019generative,shang2017vigan,cai2018deep,luo2018multivariate}.
Specialized imputation methods for traffic data are majorly matrix/tensor repair methods which use statistical techniques such as matrix/tensor decomposition
\citep{tan2013tensor,asif2016matrix,chen2018spatial,chen2019bayesian}.
Tensor-based methods have been proposed to overcome the shortcoming of matrix based ones in the face of high missing rates as tensors are capable of preserving multi-way features of traffic data
\citep{wu2019imputation}.
Despite the low-rank property of traffic data sets, which is the foundation for tensor decomposition methods, the choice of rank and decomposition method is subjective and challenging especially for incomplete data \citep{chen2019bayesian}.

Further development of imputation techniques using GAN is highly motivated by ITS.
First, the accuracy of GAN-based imputation architectures have been significantly competitive compared with both statistical and other machine-learning techniques \citep{pmlrv80yoon18a,li2018learning,kachuee2019generative,shang2017vigan,cai2018deep,luo2018multivariate}, so they are appealing to ITS where poor traffic management based on erroneous data may cost millions of dollars \citep{zambrano2020intelligent}.
Second, traffic data are usually high dimensional which can be represented by implicit models like GAN avoiding the bias of paramateric distributions\citep{goodfellow2016nips}.
Third, a generative imputer once trained can be applied to a single data, unlike clustering-based methods which must search a sample of data for finding neighbors of the incomplete one.
Finally, GAN models like every DNN can be fine-tuned efficiently as ITS data are generated in large volumes and velocity.
Despite all the advantages of the GAN-based imputation methods, they suffer from inaccuracy for high missing rates, and/or complicated architecture. 
Therefore, more exploration is needed for traffic data imputation using GAN in ITS.

The contribution of this work is as follows:
A novel GAN-based architecture for data imputation is proposed which outperforms previous GAN-based architectures with respect to accuracy and simplicity of the architecture.
The architecture, called Iterative Generative Adversarial Networks for Imputation (IGANI), is simple as it enjoys one generator and discriminator.
The generator is iterated over imputed data and the intuition behind such iteration is to train a robust discriminator which can identify the first-hand imputed data as real and the second-hand one as fake.
We will demonstrate how our proposed method compare with competing imputation methods in accurately imputing missing traffic (speed) data.
As a comparison basis, we also demonstrate that the imputed traffic data from our method can better train a short-term traffic prediction model, compared with other GAN-based imputation methods, and that this superiority holds for different missing rates.
Also, the imputation is performed and evaluated for multi-variable traffic data consisting of volume, occupancy, and speed with different missing rates for each of them.

The paper is organized as follows. First, a brief background for generative models, GANs, and GAN-based imputation architectures is provided in Section~\ref{sec.bh}. Then in Section~\ref{sec.method} we propose the architecture of IGANI. Section~\ref{sec.results} includes the application of the proposed imputation method on traffic data and the discussion of the results.

\section{Background}\label{sec.bh}

\subsection{Generative Models}
Generative models are models that generate data, either explicitly or implicitly.
Explicit generative models considers the data to follow a density function $p_{\bm{\theta}}(\bm{x})$ whose parameters $\bm{\theta}$ are  estimated through maximum likelihood method using the log-likelihood function $\log p_{\bm{\theta}}(\bm{x})$.
Though a computationally intractable log-likelihood may be substituted by other objectives like Jensen-Shannon divergence (JSD) \citep{theis2015note}.
Explicit generative models are also known as prescribed probabilistic models, because the parametric distribution is dictated in contrast to implicit models which merely generate data \citep{mohamed2016learning}.

Selecting computationally tractable densities is an important step in explicit generative models, while the choice of a density function that is capable of capturing data complexity is not straightforward.
Tractable densities are modeled either by fully visible belief networks (FVBNs) \citep{frey1998graphical} or non-linear independent component estimation (NICE) \citep{dinh2014nice}.
FVBNs are based on the chain rule of probability as $p(\bm{x})= p(x_1)p(x_2|x_1)\cdots p(x_n|x_1,x_2,\dots,x_{n-1})$ which gives samples entry-by-entry at a cost of $\mathcal{O}(n)$ for each sample.
NICE considers $p(\bm{x})$ to be a continuous, invertible, nonlinear transformation which maps a latent variable $\bm{z}$ to $\bm{x}$, i.e. $\bm{x}=g(\bm{z})$.
Then $p_x(\bm{x}) = p_z(g^{-1}(\bm{x}))|\det (\partial g^{-1}(\bm{x})/ \partial \bm{x})|$ becomes a tractable density function, if $p_z$ and the determinant is tractable.
Data generation by FVBNs is time consuming and cannot be parallelized and NICE requires $g$ to be invertible with $\bm{x}$ and $\bm{z}$ having the same dimension \citep{goodfellow2016nips}.
If the explicit density is not tractable, variational methods can be employed. A typical example is variational autoencoders (VAE) which uses a tractable lower bound  for an intractable log-likelihood \citep{rezende2015variational}. Another option is Boltzman machines which is based on Markov Chain Monte Carlo \citep{hinton1984boltzmann}.
Boltzman machines simulate a sequence of samples  $\bm{x}'\sim q(\bm{x}'|\bm{x})$ where $q$ is a transition probability density designed in such a way that the distribution of the samples will converge to  $p(\bm{x})$.
While variational methods like VAEs are affected by the accuracy of the posterior or prior distributions, MCMC methods, such as Boltzman machines, suffer from slow convergence \citep{goodfellow2016nips}.

As opposed to explicit methods, implicit generative models do not parameterize the density of data, but are still appealing for high-dimensional data representation, model-based reinforcement learning, data imputation, models with multi-modal outputs, and realistic data generation \citep{goodfellow2016nips}.
Implicit methods train functions which map from a latent variable to the data space; though the mapping is deterministic, the latent variable is the external stochastic source \citep{mohamed2016learning}.
Implicit models ranges from basic non-uniform random variate generator \citep{devroye1996random} to Generative Stochastic Networks (GSNs) \citep{bengio2014deep} and Generative Adversarial Networks (GANs) \citep{goodfellow2014generative}.
GSNs using MCMC are time-consuming especially for high-dimensional data as they use Markov chains, 
while GANs do not have most of the aforementioned shortcomings. First, data generation in GANs is performed in parallel, independently of the dimension of $\bm{x}$.
Second, Markov chains, tractable densities, invertible mapping from latent variable $\bm{z}$, and variational bounds are not required by GANs \citep{goodfellow2016nips}. In the next section we present a technical background on GANs.

\subsection{Generative Adversarial Network (GAN)}

Generative Adversarial Networks (GANs) provide a game-theoretic framework for obtaining the implicit distribution of data.
This is performed by adversarial networks where a discriminator network estimates the probability of a data instance being real or fake.
The data instances are mapped from a latent variable $\bm{z}$ by a generative model called generator which is trained to deceive the discriminator.
The discriminator is therefore trained in a supervised way, where real and fake data are labeled as one and zero, respectively \citep{goodfellow2014generative}.

Let $G$ and $D$ be the generator and discriminator, each a neural network that is differentiable with respect to its respective input, $\bm{z}$ and $\bm{x}$, and also with respect to their respective neural parameters,  $\bm{\theta}^{(G)}$ and $\bm{\theta}^{(D)}$. The generator $G$ for each $d$-dimensional (noise) input $\bm{z}$ returns a $d$-dimensional output $\hat{\bm{x}}$, and the discriminator $D$ for each input data (either $\bm{x}$ or $\hat{\bm{x}}$) returns a scalar indicating (deterministically or probabilistically) whether the data is real or fake.
The training involves minimizing  two cost functions associated with the generator and discriminator, namely $J^{(G)}(\bm{\theta}^{(G)},\bm{\theta}^{(D)})$, and $J^{(D)}(\bm{\theta}^{(G)},\bm{\theta}^{(D)})$. Note that both functions depends on both $\bm{\theta}^{(G)}$ and $\bm{\theta}^{(D)}$. In each iteration of the training, a minibatch of inputs, i.e. $\bm{z}$ and $\bm{x}$ are selected for $D$ and $G$ and gradient-based optimization minimizes the cost functions of the generator and discriminator subsequently. The architecture of GAN in a training iteration is illustrated in Fig. (\ref{fig.gan}) for the mini-batch size of 3 and a 5-dimensional data.

\begin{figure}[h]
	\centering
	\includegraphics[width=3.5in]{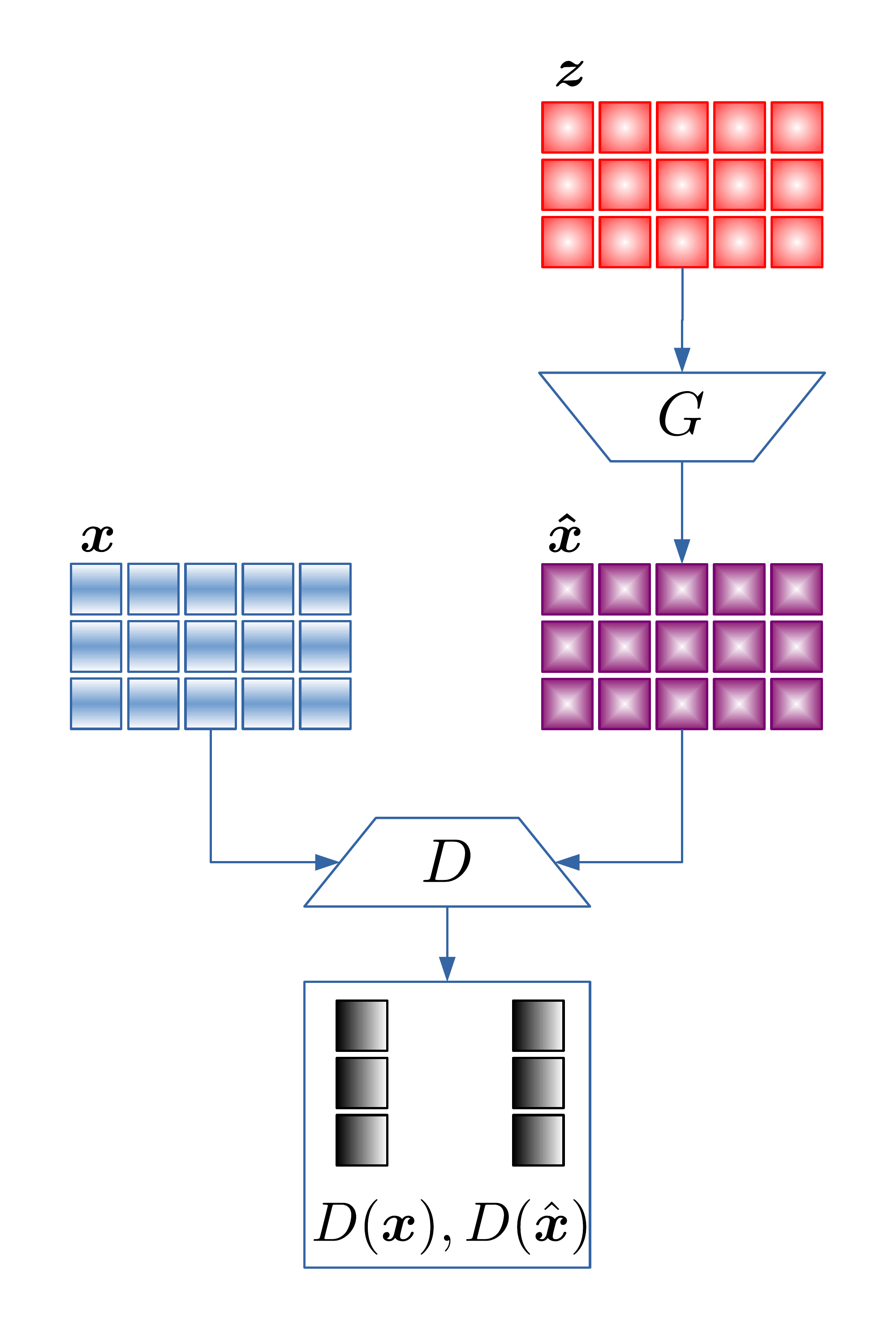}
	\caption{The GAN architecture, where the generator $G$ generates data $\hat{\bm x} = G(\bm{z})$  from the latent variable $\bm{z}$  and the discriminator $D$ determines whether  $\hat{\bm x}$ or  $\bm{x}$ is fake or not. $G$ and $D$ are adversarial networks, i.e. $G$ is trained toward deceiving $D$ and  as a result the distribution of the generated data $\hat{\bm x}$ tends to that of $\bm x$. The number of columns in $\bm{x}$, $\bm{z}$, and $\hat{\bm{x}}$ equals the dimension of data and their number of rows is the number of samples (or mini-batches). }  \label{fig.gan}
\end{figure}

The discriminator parameters $\bm{\theta}^{(D)}$ are estimated  by minimizing $J^{(D)}(\bm{\theta}^{(G)},\bm{\theta}^{(D)})$, while  $\bm{\theta}^{(G)}$ are estimated by minimizing $J^{(G)}(\bm{\theta}^{(G)},\bm{\theta}^{(D)})$. The whole minimization is considered as a game, because each network can control only its own parameters \citep{goodfellow2016nips}. The solution of such a game is that of Nash equilibrium \citep{ratliff2013characterization}, i.e. the pair $(\bm{\theta}^{(G)},\bm{\theta}^{(D)})$ which locally minimizes $J^{(D)}(\bm{\theta}^{(G)},\bm{\theta}^{(D)})$ with respect to $\bm{\theta}^{(D)}$, and minimizes $J^{(G)}(\bm{\theta}^{(G)},\bm{\theta}^{(D)})$ with respect to $\bm{\theta}^{(G)}$.

Different schemes for cost functions can be used in GANs. These schemes usually differ in $J^{(G)}$, rather than in  $J^{(D)}$ \citep{goodfellow2016nips}.
One simple choice for the discriminator's cost function is given by
\begin{equation}\label{lossD}
		J^{(D)}(\bm{\theta}^{(G)},\bm{\theta}^{(D)})  = 
		 -\frac{1}{2} \mathbb{E}_{\bm x} \log(D(\bm{x}))
		 - \frac{1}{2} \mathbb{E}_{\bm z} (1-\log(D(G(\bm{z}))).
\end{equation}
where $\mathbb{E}_x$ denote the expected (or mean) value calculated over the distribution of $x$. In the zero-sum game, the cost of generator is selected such that it neutralizes the cost of discriminator, i.e.,
\begin{equation}
	J^{(G)} = -J^{(D)}.
\end{equation}
This is equivalent to a minimax optimization as follows
\begin{equation}
	\bm{\theta}^{(G)*} =
	\arg \min_{\bm{\theta}^{(G)}} 
	\max_{\bm{\theta}^{(D)}}
	- J^{(D)}(\bm{\theta}^{(G)},\bm{\theta}^{(D)}),
\end{equation}
which does not perform well in practice because both $D$ and $G$ are minimizing and maximizing the same cross-entropy, respectively.
Instead of minimizing the probability of a correct discrimination, to avoid vanishing gradient issue in the optimization, $G$ can be trained by maximizing the probability of a wrong discrimination, or effectively by minimizing the following cost function
\begin{equation}\label{lossG}
	J^{(G)}(\bm{\theta}^{(G)},\bm{\theta}^{(D)}) = -\frac{1}{2} \mathbb{E}_z \log(D(\bm{z})).
\end{equation}

The role of discriminator in predicting the probability of being real or fake data can be relaxed using Wasserstein distance in the cost functions.
Wasserstein or Earth-Mover distance facilitate the convergence of a probability distribution sequence for which popular measures for distributional distance like Jensen-Shannon divergence, Kullback-Leibler divergence do not converge.
Therefore, training GANs using Wasserstein distance (WGANs) is more stable and the results are less sensitive to hyper-parameters and architectures \citep{arjovsky2017wasserstein}.
WGANs uses a linear activation output layer instead of sigmoid function for the discriminator (critic), employing the Wasserstein distance in the cost functions, and constraining the parameters (namely weights) of discriminator to fall into a compact support by clipping. 
Clipping underuses the capacity of discriminator and, if not tuned well, may lead to exploding or vanishing gradient.
This can be avoided by incorporating a gradient penalty (GP) in WGAN Lipschitz constraint on the discriminator loss \citep{gulrajani2017improved} as follows
\begin{equation}\label{wgan.Dloss}
		J^{(D)}_W(\bm{\theta}^{(G)},\bm{\theta}^{(D)})  =  
		\mathbb{E}_x D(\bm{x}) - \mathbb{E}_z D(G(\bm{z})) \\
		+ \lambda \mathbb{E}_y ( \Vert \nabla_{\bm{y}} D(\bm{y}) \Vert - 1)^2
\end{equation}
where $\bm{y}=tG(\bm{z})+(1-t)\bm{x}$ with $0\leq t \leq 1$ and $\lambda$ is a coefficient, usually set as $\lambda=10$.
The cost function of generator like Eq. (4) can be written as
\begin{equation}\label{wgan.Gloss}
	J^{(G)}_W(\bm{\theta}^{(G)},\bm{\theta}^{(D)})=
	-\mathbb{E}_z D(\bm{z}).
\end{equation}
The proposed algorithm in this work uses WGAN-GP with the cost functions defined in  Eqs. (5-6).

\subsection{GAN-based imputation methods}

As mentioned above, GAN is an implicit generative model which can be applied to data imputation.
A pioneering method is Generative Adversarial Imputation Network (GAIN) \citep{pmlrv80yoon18a} which has a sweet epoch in the training phase urging early-stoping. GAIN has not been successful on block missing patterns. For these patterns, another architecture, called MisGAN, has been shown to be more effective \citep{li2018learning}.
In MisGAN, three generators for mask, complete data, and imputed data are simultaneously trained together with their corresponding discriminators.
In~\citep{kachuee2019generative}, the performance of MisGAN was questioned in uniform missing pattern and a Generative Imputation (GI) was introduced as an accurate method for both block and uniform missing patterns.
The problem of imputing multi-view and multi-modal data which are observed from heterogeneous sources is addressed in \citep{shang2017vigan,cai2018deep}. 
GAN is also leveraged by Recurrent Neural Network (RNN) to impute incomplete multivariate time series in \citep{luo2018multivariate}, and demonstrated accuracy improvement over GAIN and MisGAN.
In this work, we used GAIN and MisGAN as the prominent competing architectures against which our proposed IGANI method is compared. 
Before that, we define what a generative imputer is:

\begin{definition}\label{defimputer}
	Let $\bm x\in \mathbb{R}^d$ denote a random vector and $\bm m\in \lbrace 0,1 \rbrace^d$ be its random mask where $m_j=0$ or $m_j=1$ means that $x_j$ is observed or missing, respectively.
	A generative imputer is defined as $(\bm u,\bm v)=G(\bm x,\bm m,\bm z)$ where
	\begin{equation}\label{imputer}
		\begin{split}
			\bm u & = \bm x\odot \bm m + \bm z\odot (1-\bm m) \\
			\bm v & = \bm x\odot \bm m + g(\bm u)\odot (1-\bm m) 
		\end{split}
	\end{equation}
	where $\bm z\in \mathbb{R}^d$ is noise and $g(\cdot)$ is a function to be learned, and $\bm v$ is the imputed vector where $v_j = x_j$ if $x_j$ is observed.
	Alternatively, one may write $(\bm u,\bm v)=G(\bm x^{(m)},\bm m,\bm z^{(1-m)})$ where $\bm x^{(m)} = \bm x\odot \bm m$ and $\bm z^{(1-m)} = \bm z\odot (1-\bm m)$.
	
\end{definition}

\begin{figure}[]
	\centering
	\includegraphics[scale=0.35]{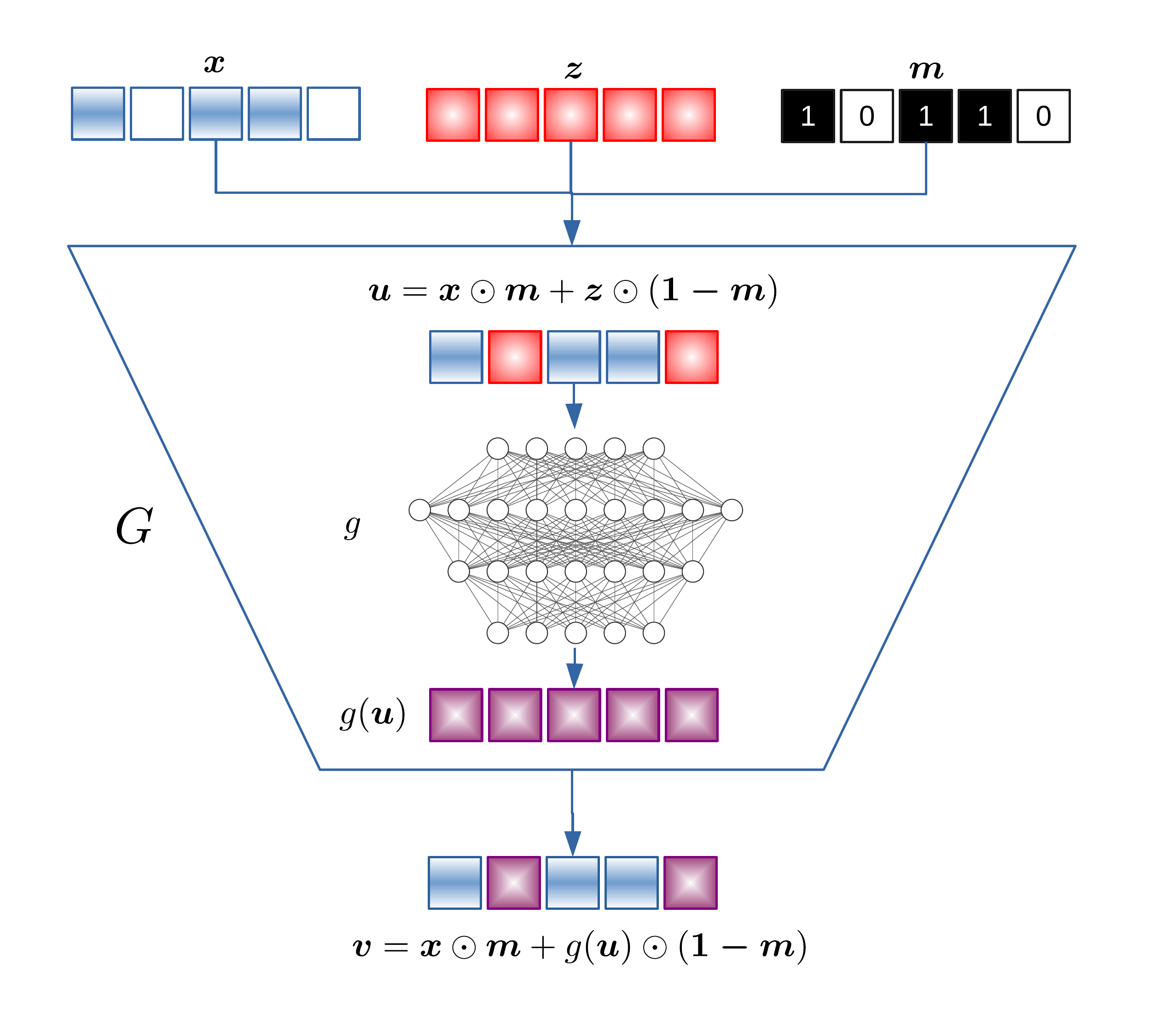}
	\caption{Generative imputer: blue and white squares denote observed and missing values of incomplete data $\bm x$. Noise is represented by red squares as shown for the noise vector $\bm z$. The generated data $g(\bm u)$ is shown in purple. Note that the imputed vector $\bm v$ shares the observed values with $\bm x$.} \label{fig.genimp}
\end{figure}

In GAIN, shown in Fig.~\ref{fig.gain}, a discriminator $D$ maximizes the probability of predicting mask $\bm m$ based on the imputed data $\bm v$.
Prediction of the mask can also be interpreted as how real/fake every element of the imputed vector is.
This generalizes the role of conventional discriminators in GANs which give a scalar score for the whole generated vector.
Mask prediction is realized by using a partial information about the original mask matrix which is referred to as the hint $\bm h$.
Specifically in GAIN, the hint mechanism chooses one element of each row of the mask matrix randomly and set it to be 0.5.
Using the imputed data $\bm v$ and the hint $\bm h$, the discriminator $D$ predicts the mask as $\hat{\bm m} = D(\bm h, \hat{\bm x})$ whose distance from $\bm m$ is minimized. Using the Sigmoid function as the last layer, we force the entries of $\hat{\bm m}$ to be in $[0,1]$.
The cost functions for the discriminator and generator in Eqs. (\ref{lossD}, \ref{lossG}) are therefore modified as:
\begin{equation}
		J^{(D)}(\bm{\theta}^{(G)},\bm{\theta}^{(D)})  = - \mathbb{E}_{\bm  m,\hat{\bm m}}  \big[ \sum_{i=1}^d [ \bm  m_i\log(\hat{\bm m}_i)+
		(1-\bm m_i)\log(1-\hat{\bm m_i}) ] \big] 
\end{equation}
and
\begin{equation}
	\begin{split}
		J^{(G)}(\bm{\theta}^{(G)},\bm{\theta}^{(D)}) & =
		- \mathbb{E}_{\bm  m,\hat{\bm m}} \left[ \sum_{i=1}^d (1-\bm m_i)\log(\hat{\bm m_i}) \right]\\
	\end{split}
\end{equation}

\begin{figure}[]
	\centering
	\includegraphics[width=3in]{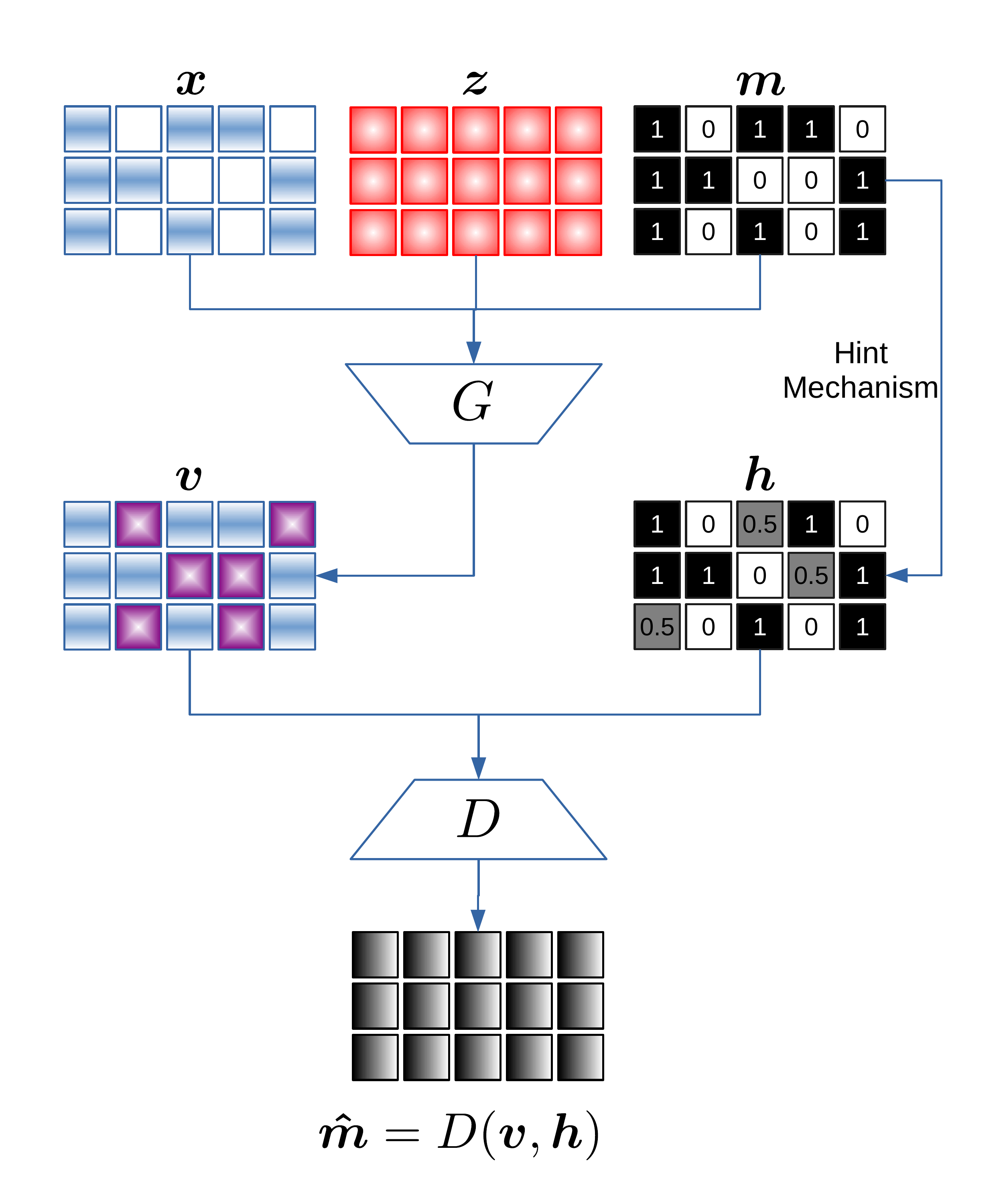}
	\caption{GAIN architecture: the generative imputer $G$ is trained along with the discriminator $D$ which predicts the mask ($\bm m$) using the imputed data ($\bm v$) and the hint ($\bm h$).} \label{fig.gain}
\end{figure}

MisGAN, compared to GAIN, has a more complicated architecture and includes three pairs of generators and discriminators trained simultaneously by the WGAN loss functions in Eqs (\ref{wgan.Dloss}) and (\ref{wgan.Gloss}).
Generators are $G_m$, $G_x$, and $G_i$ which generate mask, data, and imputed data, respectively.
Mask generation is the first step, because we access to the real mask matrix whose distribution can be implicitly recovered by training $G_m$ within a GAN scheme.
The same procedure can be applied to train $G_x$, i.e. the generator of observed data represented by the mask function $f(\bm x,\bm m)=\bm x\odot \bm m + \tau (1-\bm m)$.
Note that the arbitrary constant $\tau$ is a proxy for the unobserved data and the choice of its value does not affect the convergence.
Given the distribution of mask and data as described, one may train the imputer generator ($G_i$) whose output ($v$) is discriminated from the data generated by $G_x$, i.e. ($\hat{\bm x}$).

\begin{figure}[]
	\centering
	\includegraphics[width=3.5in]{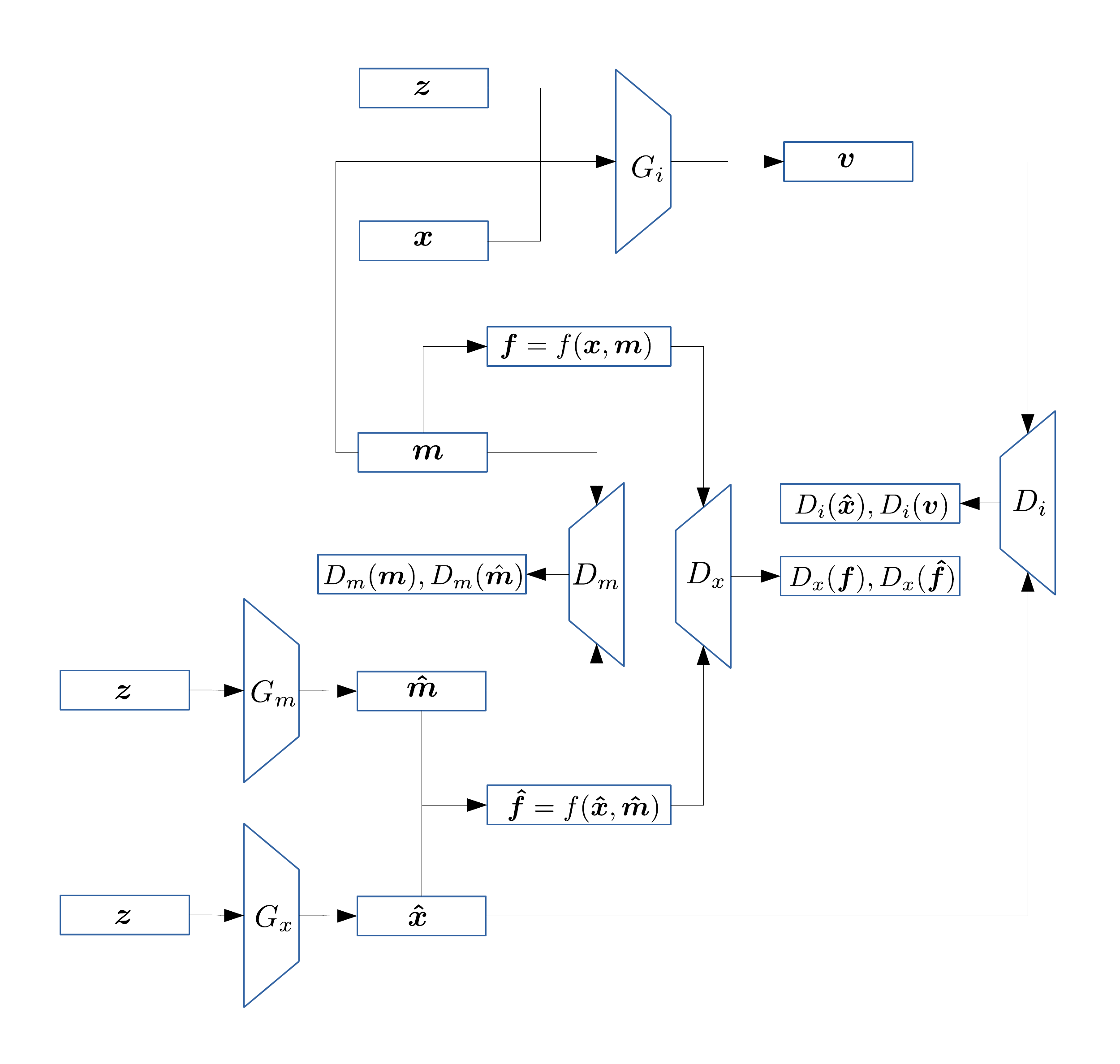}
	\caption{MisGAN architecture: three pairs of generators and discriminators are trained simultaneously. $G_m$, $G_x$, and $G_i$ generate mask, data, and imputed data, respectively. A mask function $f(\bm x,\bm m)=\bm x\odot \bm m + \tau (1-\bm m)$ is used to train the data generator $G_x$ and its corresponding discriminator ($D_x$). Note that $G_i$ denotes the generative imputer and $\bm v$ is the imputed data.} \label{fig.MisGAN}
\end{figure}

\section{Method}\label{sec.method}

In this section, we propose the architecture of IGANI as a GAN-based data imputation method.
As can be seen in Fig. \ref{fig.IGANI},  this approach  first applies the generative imputer $G$  to produce the `imputed' data $\bm v$. Then  in an iteration,  it again applies  $G$, this time on $\bm v$ and a reshuffled mask matrix, to produce a `re-imputed' data  $\hat{\bm v}$.
Note that ${\bm v}$ has more observed values compared to $\hat{\bm v}$.
Then, the discriminator $D$ seeks to distinguish between `imputed' and `re-imputed' data, by giving scalar scores to  $\bm v$ and  $\hat{\bm v}$ vectors separately, as done in conventional GANs.
We train IGANI, or the two neural networks $G$ and $D$ specifically, by minimizing  the following  WGAN loss functions
\begin{equation}\label{IGANIlosses}
	\begin{split}
		J^{(D)}_W &= \mathbb{E}_v D(\bm{\bm{v}}) - \mathbb{E}_z D(\bm{\hat{v}}) + \lambda \mathbb{E}_y ( \Vert  \nabla_{\bm{y}} D(\bm{y}) \Vert - 1)^2,\\
		J^{(G)}_W &= - \mathbb{E}_z D(\bm{\hat{v}}),
	\end{split}
\end{equation}
where $\bm{y}=tG(\bm{z})+(1-t)\bm{v}$ with $0\leq t \leq 1$ and $\lambda = 10$.

According to the architecture in Fig. \ref{fig.IGANI}, the distribution of $\hat{\bm v}$ tends to that of ${\bm v}$, if the GAN converges.
However, the ultimate goal is to show that the distribution of the imputed data with the new mask ($\bm v^{(n)} = \bm v \odot \bm n$) tends to that of the observed data ($\bm x^{(n)} = \bm x \odot \bm m$).
Briefly speaking, we want to show that $p_{\hat{\bm v}} \rightarrow p_{\bm v}$ gives $p_{\bm v^{(n)}} \rightarrow p_{\bm x^{(m)}}$.
This is guaranteed by the invertiblity of the generative imputer $G$ in Def. (\ref{defimputer}) which is preceded by showing that $\bm m$ is recovered almost everywhere from $(\bm u, \bm v)$.
This is rather intuitive; as Fig.~\ref{fig.genimp} shows, $\bm v$ differs from $\bm u$ only for missing values:

\begin{figure}[!h]
	\centering
	\includegraphics[width = 3in]{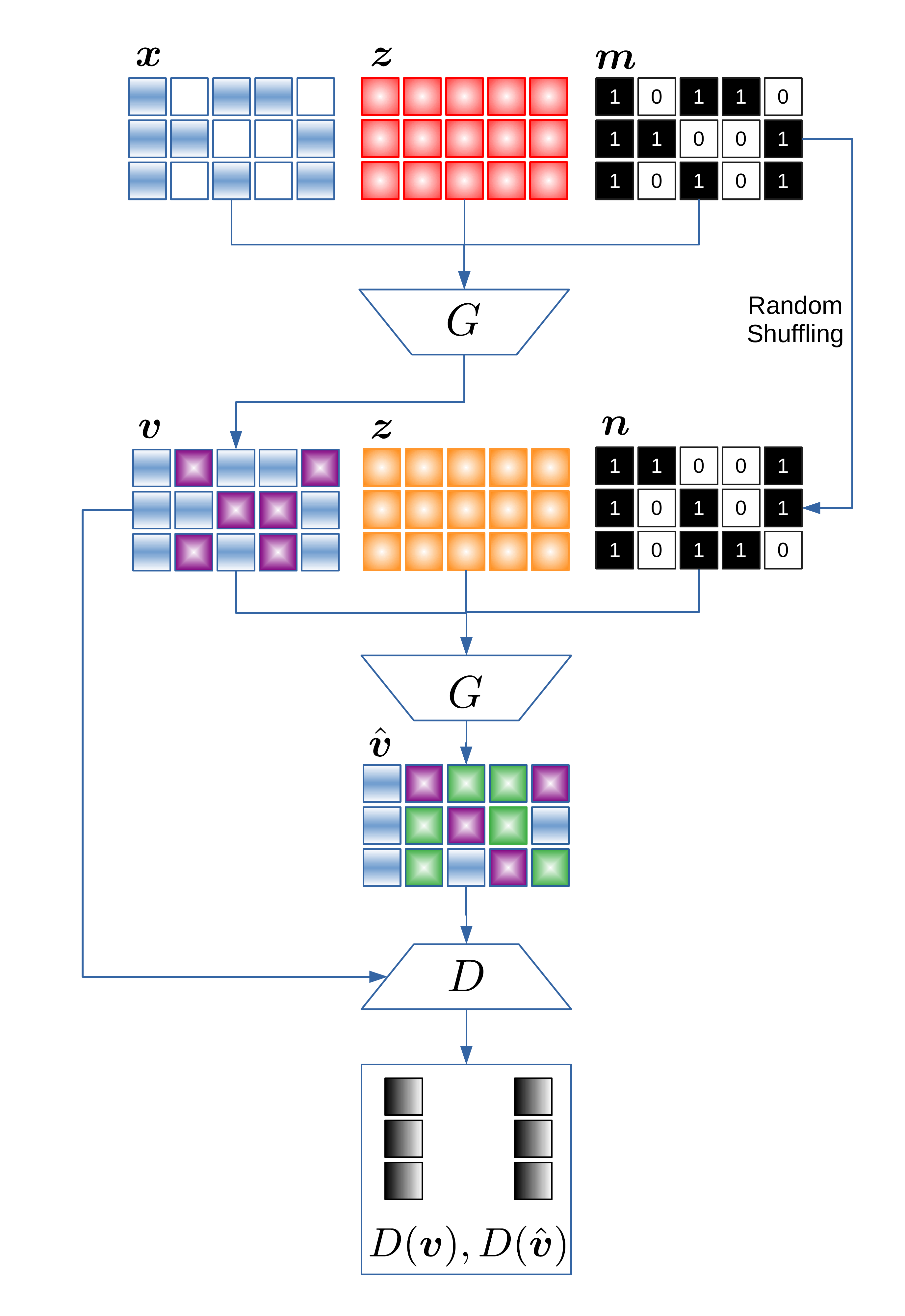}
	\caption{IGANI architecture: iteration of the generative imputer $G$ gives a second-hand imputed data $\hat{\bm v}$ which has less observed values compared with the first-hand imputed data $\bm v$; compare the blue squares between $\bm v$ and $\hat{\bm v}$. So $G$ and $D$ are trained to compensate the decline of observed values after the secondary imputation, by means of imputing more accurately.} \label{fig.IGANI}
\end{figure}

\begin{lemma}\label{gnonlinear}
	For a nonlinear function $g$ in Eq. (\ref{imputer}), $\bm m=\mathds{1}_{(\bm u=\bm v)}$ holds almost everywhere.
\end{lemma}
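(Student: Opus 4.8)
\emph{Proof proposal.} I would argue coordinate by coordinate. Expanding Eq.~(\ref{imputer}) entrywise,
\begin{equation*}
u_j-v_j=\bigl(x_jm_j+z_j(1-m_j)\bigr)-\bigl(x_jm_j+g(\bm u)_j(1-m_j)\bigr)=(1-m_j)\bigl(z_j-g(\bm u)_j\bigr).
\end{equation*}
So on every coordinate with $m_j=1$ one has $u_j=v_j$ identically, hence $\mathds{1}_{(u_j=v_j)}=1=m_j$ with no assumption on $g$ whatsoever. The whole substance of the lemma is the reverse direction: on the coordinates with $m_j=0$ one must show $u_j\neq v_j$, i.e.\ $z_j\neq g(\bm u)_j$, outside a set of measure zero with respect to the joint law of $(\bm x,\bm m,\bm z)$.

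Next I would localize in one scalar noise coordinate. Since $\bm m$ is discrete it suffices to fix a value of $\bm m$ and an index $j$ with $m_j=0$; then $u_j=z_j$, while every other entry of $\bm u$ is a function of $(\bm x,\bm z_{-j})$ that does not involve $z_j$. Freezing $(\bm x,\bm z_{-j})$, the quantity $g(\bm u)_j$ becomes a function $\phi$ of $z_j$ alone, and the bad event is exactly the fixed-point set $\{z_j=\phi(z_j)\}$, that is, the zero set of $\psi(t):=t-\phi(t)$. If $g$ is assembled from analytic activations (sigmoid, tanh, softplus, \dots) then $\psi$ is real-analytic, hence either identically zero or possessing a discrete — therefore Lebesgue-null — zero set; since the noise $z_j$ has an absolutely continuous law, in the latter case the bad event has probability zero. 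Integrating over the frozen variables by Fubini, then taking the finite union over $j$ and the countable union over the values of $\bm m$, would yield $\bm m=\mathds{1}_{(\bm u=\bm v)}$ almost everywhere.

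The remaining — and, I expect, genuinely delicate — point is to exclude the degenerate alternative $\psi\equiv 0$, i.e.\ $\phi(t)=t$ for all $t$, occurring on a positive-measure family of the frozen configurations; by analyticity this would force $g(\bm u)_j\equiv u_j$, so that the $j$-th output of $g$ is literally the $j$-th coordinate projection. A merely ``nonlinear'' $g$ does not by itself rule this out (e.g.\ $g(\bm u)=(u_1,u_2^2,\dots,u_d^2)$ is nonlinear, yet its first output is the identity, and the claim fails on coordinate $1$ whenever $m_1=0$). So the honest reading of the hypothesis should be that \emph{no} coordinate sub-map of $g$ reduces to a coordinate projection — the generic situation for the network weights, and not a case one encounters with an actually trained generator — together with the analyticity used above; with these two conditions made explicit the estimate of the previous paragraph goes through and the bookkeeping is routine. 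If one wishes to avoid analyticity (e.g.\ ReLU generators), the same scheme works with ``$\psi$ real-analytic and non-trivial $\Rightarrow$ isolated zeros'' replaced by ``$\psi$ piecewise linear and not vanishing on a full-dimensional cell (again the non-degeneracy hypothesis) $\Rightarrow$ zero set of lower dimension''.
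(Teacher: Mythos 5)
Your proposal follows the same basic route as the paper's proof --- reduce the claim to showing that the coincidence set $\{\bm z : z_j = g(\bm u)_j\}$ on the unobserved coordinates has measure zero --- but it is considerably more careful than what the paper actually writes. The paper's own proof is essentially two sentences: ``if $g$ is nonlinear, $g(\bm u)$ cannot be a linear combination of $\bm z$, therefore the measure of $\{\bm z : \exists i,\ z_i = g(\bm u)_i\}$ is zero.'' This inference is exactly the step you isolate as delicate, and your counterexample $g(\bm u) = (u_1, u_2^2, \dots, u_d^2)$ shows the paper's stated hypothesis is genuinely insufficient: a nonlinear $g$ can still have a coordinate sub-map equal to the identity projection, in which case $u_j = v_j$ on a missing coordinate with probability one and the lemma fails. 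Your freezing-and-Fubini argument, the reduction to the zero set of $\psi(t) = t - \phi(t)$, and the analyticity (or piecewise-linear cell-dimension) dichotomy are the correct way to make the measure-zero claim rigorous, and the extra non-degeneracy hypothesis you introduce --- no output coordinate of $g$ restricts to the corresponding coordinate projection --- is exactly what the lemma needs and what the paper implicitly assumes when it appeals to ``deep neural networks with nonlinear activations.'' In short: same idea, but your version identifies and patches a real gap in the paper's argument rather than reproducing it.
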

\begin{proof}
	If $g$ is a nonlinear function, $g(\bm{u})$ cannot be a linear combination of $\bm{z}$.
	Therefore, the measure of $\lbrace \bm{z} \in \mathbb{R}^d| \exists i \leq d: \bm{z}_i = g(\bm{u})_i\rbrace$ is zero, which means that $\bm{u}$ and $\bm{v}$ are not equal for unobserved indices almost everywhere and $\bm m=\mathds{1}_{(\bm u=\bm v)}$.
	Note that the function sequence $\lbrace g_i\rbrace_{i=1}^{\infty}$ estimating $g$ is trained by deep neural networks which are nonlinear because of their nonlinear activation functions.
\end{proof}

\begin{lemma}\label{Ginvertible}
	The generative imputer $(\bm u,\bm v)=G(\bm x^{(m)},\bm m,\bm z^{(1-m)})$ in Def. 1 is invertible for $\bm m=\mathds{1}_{(\bm u=\bm v)}$.
\end{lemma}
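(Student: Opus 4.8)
The plan is to write down an explicit inverse and check it is well defined off a null set. The key observation is that the middle argument $\bm m$ of $G$ can be read back from the output pair $(\bm u,\bm v)$, and that once $\bm m$ is known the other two arguments are simply the masked and complementary‑masked parts of $\bm u$. So I would first invoke Lemma~\ref{gnonlinear}: because the learned $g$ is nonlinear, $\bm m = \mathds{1}_{(\bm u=\bm v)}$ holds almost everywhere, which recovers the mask from $(\bm u,\bm v)$ up to a set of measure zero --- exactly the hypothesis under which the present lemma is stated.

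Next, conditioned on knowing $\bm m$, I would extract the remaining coordinates directly from the first line of Eq.~(\ref{imputer}). Since $\bm m$ is $\{0,1\}$‑valued, $\bm m\odot\bm m=\bm m$ and $(1-\bm m)\odot\bm m=\bm 0$, so $\bm u\odot\bm m=\bm x\odot\bm m=\bm x^{(m)}$ and $\bm u\odot(1-\bm m)=\bm z\odot(1-\bm m)=\bm z^{(1-m)}$. Hence the candidate inverse is the map $(\bm u,\bm v)\mapsto\big(\bm u\odot\bm m,\ \bm m,\ \bm u\odot(1-\bm m)\big)$ with $\bm m:=\mathds{1}_{(\bm u=\bm v)}$. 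To finish I would verify the two compositions. Applying $G$ and then this map returns $(\bm x^{(m)},\bm m,\bm z^{(1-m)})$ by the identities just noted, so it is a left inverse on the full domain minus a null set. Conversely, feeding $(\bm u\odot\bm m,\bm m,\bm u\odot(1-\bm m))$ back through $G$ reproduces $\bm u$ at once, and reproduces $\bm v$ because, using $\bm x\odot\bm m=\bm u\odot\bm m$, the second line of Eq.~(\ref{imputer}) gives $\bm v=\bm u\odot\bm m+g(\bm u)\odot(1-\bm m)$, which is precisely what the reconstruction outputs; so it is also a right inverse onto the image of $G$. Thus $G$ is an (almost‑everywhere) bijection onto its image, which is the sense of invertibility needed for the pushforward argument $p_{\hat{\bm v}}\to p_{\bm v}\ \Rightarrow\ p_{\bm v^{(n)}}\to p_{\bm x^{(m)}}$ carried out in the next section.

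I do not expect a genuine obstacle; the only things to be careful about are bookkeeping and interpretation. The invertibility is inherently almost‑everywhere, inherited from Lemma~\ref{gnonlinear}: on the measure‑zero set where some $z_i$ coincides with $g(\bm u)_i$ the mask is not identifiable from $(\bm u,\bm v)$ and the inverse is ambiguous. I would also stress what is \emph{not} used --- $g$ itself need not be invertible, and $\bm v$ carries no information beyond locating the missing entries --- so the invertibility is structural, coming from the fact that $\bm u$ stores $\bm x^{(m)}$ and $\bm z^{(1-m)}$ in complementary coordinates. If one wants invertibility literally as a map between spaces rather than merely a.e.\ injectivity, I would restrict the codomain to the graph $\{(\bm u,\bm v):\ \bm v=\bm u\odot\bm m+g(\bm u)\odot(1-\bm m),\ \bm m=\mathds{1}_{(\bm u=\bm v)}\}$, on which the displayed map is a genuine two‑sided inverse.
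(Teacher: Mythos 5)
Your proof is correct and follows essentially the same route as the paper's: recover $\bm m$ as $\mathds{1}_{(\bm u=\bm v)}$ (via Lemma~\ref{gnonlinear}) and then write the three inputs explicitly in terms of the output pair $(\bm u,\bm v)$. The only cosmetic difference is that you read $\bm x^{(m)}=\bm u\odot\bm m$ and $\bm z^{(1-m)}=\bm u\odot(1-\bm m)$ off the first line of Eq.~(\ref{imputer}), whereas the paper solves the second line to get $\bm x^{(m)}=\bm v-g(\bm u)\odot \mathds{1}_{(\bm u\neq \bm v)}$; your inverse is marginally cleaner since it never evaluates $g$, but both are valid.
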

\begin{proof}
	Eq. (\ref{imputer}) is written as
	\begin{equation}\label{imputer2}
		\begin{split}
			\bm u & = \bm x^{(m)} + \bm z^{(1-m)}  \\
			\bm v & = \bm x^{(m)} + g(\bm u) \odot (1-\bm m)
		\end{split}
	\end{equation}
	Then $(\bm x^{(m)},\bm m,\bm z^{(1-m)})$ can be written explicitly in terms of $(\bm u,\bm v)$ as follows
	\begin{equation}
		\begin{split}
			\bm x^{(m)} & = \bm v - g(\bm u)\odot \mathds{1}_{(\bm u\neq \bm v)}  \\
			\bm z^{(1-m)} & = \bm u - \bm v + g(\bm u)\odot \mathds{1}_{(\bm u\neq \bm v)}
		\end{split}
	\end{equation}
	which proves the invertibility of $(\bm u,\bm v)=G(\bm x^{(m)},\bm m,\bm z^{(1-m)})$ for $\bm m=\mathds{1}_{(\bm u=\bm v)}$. 
\end{proof}

Now consider the architecture in Fig. (\ref{fig.IGANI}) which gives $p_{\hat{\bm v}}\rightarrow p_{\bm v}$ provided that GAN converges.
The aim is to show that $p_{\bm v^{(n)}} \rightarrow p_{\bm x^{(m)}}$.
Let $(\bm u,\bm v)=G(\bm x^{(m)},\bm m,\bm z^{(1-m)})$ and $(\hat{\bm u},\hat{\bm v}) = G(\bm v^{(n)},\bm n,\bm z^{(1-n)})$  where $G$ is the generative imputer in Def. 1 and $\bm n\sim p_m$:
\begin{theorem}\label{theo1}
	$p_{\hat{\bm u},\hat{\bm v}}\rightarrow p_{\bm u,\bm v}$ gives $p_{\bm v^{(n)}} \rightarrow p_{\bm x^{(m)}}$.
\end{theorem}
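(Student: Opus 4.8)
The plan is to exploit the invertibility of the generative imputer (Lemma~\ref{Ginvertible}), together with Lemma~\ref{gnonlinear} which guarantees the mask-recovery condition $\bm m=\mathds{1}_{(\bm u=\bm v)}$ almost everywhere, in order to realize the triple $(\bm v^{(n)},\bm n,\bm z^{(1-n)})$ as a fixed deterministic map $\Phi:=G^{-1}$ applied to $(\hat{\bm u},\hat{\bm v})$, and symmetrically $(\bm x^{(m)},\bm m,\bm z^{(1-m)})$ as the \emph{same} map $\Phi$ applied to $(\bm u,\bm v)$. The conceptual crux is that IGANI re-applies the already-trained imputer $G$ (same learned $g$) and draws the reshuffled mask from the same law, $\bm n\sim p_m$; hence the inverse is literally the same function in both applications, and a single pushforward argument will link the two convergences.

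Concretely, I would first instantiate the inverse formulas of Lemma~\ref{Ginvertible} for the second call $(\hat{\bm u},\hat{\bm v})=G(\bm v^{(n)},\bm n,\bm z^{(1-n)})$: almost surely $\bm n=\mathds{1}_{(\hat{\bm u}=\hat{\bm v})}$, $\bm v^{(n)}=\hat{\bm v}-g(\hat{\bm u})\odot \mathds{1}_{(\hat{\bm u}\neq\hat{\bm v})}$, and $\bm z^{(1-n)}=\hat{\bm u}-\hat{\bm v}+g(\hat{\bm u})\odot \mathds{1}_{(\hat{\bm u}\neq\hat{\bm v})}$, where the `almost surely' is exactly what Lemma~\ref{gnonlinear} supplies (on missing coordinates $\hat u_i=z_i\neq g(\hat{\bm u})_i$, on observed coordinates $\hat u_i=\hat v_i$ by construction). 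The identical formulas with the hats removed give $\Phi(\bm u,\bm v)=(\bm x^{(m)},\bm m,\bm z^{(1-m)})$ a.s.

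Then I would transport the hypothesis through $\Phi$ and then through the coordinate projection $\pi:(\bm a,\bm b,\bm c)\mapsto \bm a$. Since $\Phi$ is a fixed measurable map with $\Phi(\hat{\bm u},\hat{\bm v})=(\bm v^{(n)},\bm n,\bm z^{(1-n)})$ and $\Phi(\bm u,\bm v)=(\bm x^{(m)},\bm m,\bm z^{(1-m)})$, the convergence $p_{\hat{\bm u},\hat{\bm v}}\to p_{\bm u,\bm v}$ yields $p_{(\bm v^{(n)},\bm n,\bm z^{(1-n)})}=\Phi_{\#}p_{\hat{\bm u},\hat{\bm v}}\to \Phi_{\#}p_{\bm u,\bm v}=p_{(\bm x^{(m)},\bm m,\bm z^{(1-m)})}$; applying $\pi_{\#}$ (taking the first marginal) gives $p_{\bm v^{(n)}}\to p_{\bm x^{(m)}}$, which is the claim.

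The step I expect to need the most care is transporting the limit through $\Phi$. If one reads `$\to$' as equality at the GAN optimum this is immediate, since equal laws have equal pushforwards under any measurable map; but for genuine weak convergence one must check that the discontinuity set of $\Phi$ --- created by the indicators $\mathds{1}_{(\hat u_i\neq\hat v_i)}$ --- is null under the limit law $p_{\bm u,\bm v}$. On missing coordinates this is precisely Lemma~\ref{gnonlinear}, while on observed coordinates $\hat u_i=\hat v_i$ holds identically so the indicator is `pinned' and contributes no instability; still, making this rigorous together with the projection step requires handling the mixed discrete--continuous structure of these laws carefully. A secondary point worth flagging is that the stated hypothesis $p_{\hat{\bm u},\hat{\bm v}}\to p_{\bm u,\bm v}$ is stronger than the $p_{\hat{\bm v}}\to p_{\bm v}$ that the WGAN training in Eq.~(\ref{IGANIlosses}) directly enforces; I would either take it as given (as the statement does) or note that $\bm u$ is recovered deterministically alongside $\bm v$ through the same inverse, so that the joint convergence follows from the marginal one.
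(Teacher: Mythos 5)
Your proposal is correct and follows essentially the same route as the paper's own proof: invoke the invertibility of $G$ from Lemma~\ref{Ginvertible} to pull the convergence $p_{\hat{\bm u},\hat{\bm v}}\to p_{\bm u,\bm v}$ back to $p_{\bm v^{(n)},\bm n,\bm z^{(1-n)}}\to p_{\bm x^{(m)},\bm m,\bm z^{(1-m)}}$, then pass to the first marginal. The only difference is that you make explicit the pushforward/continuous-mapping step and the need for the discontinuity set of $G^{-1}$ (arising from the indicators $\mathds{1}_{(\bm u\neq\bm v)}$) to be null under the limit law, a technical point the paper's proof asserts implicitly via Lemma~\ref{gnonlinear} without spelling it out.
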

\begin{proof}
	If $p_{\hat{\bm u},\hat{\bm v}}\rightarrow p_{\bm u,\bm v}$, i.e.
	\begin{equation}
		p_{G(\bm v^{(n)},\bm n,\bm z^{(1-n)})} \rightarrow p_{G(\bm x^{(m)},\bm m,\bm z^{(1-m)})}
	\end{equation}
	invertibility of the generative imputer $G$ as shown in Lemma \ref{Ginvertible} gives
	\begin{equation}
		p_{\bm v^{(n)},\bm n,\bm z^{(1-n)}} \rightarrow p_{\bm x^{(m)},\bm m,\bm z^{(1-m)}}.
	\end{equation}
	As the limit holds for joint distributions, it must hold for marginals and $p_{\bm v^{(n)}} \rightarrow p_{\bm x^{(m)}}$.
\end{proof}

\begin{lemma}\label{joint}
	$p_{\hat{\bm v}}\rightarrow p_{\bm v}$ gives $p_{\hat{\bm u},\hat{\bm v}}\rightarrow p_{\bm u,\bm v}$.
\end{lemma}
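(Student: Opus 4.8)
The plan is to reduce the claim to a single application of the continuous mapping theorem, using the explicit algebraic structure of the generative imputer. First I would record the identities that Eq.~(\ref{imputer}) forces on the two outputs of $G$. On the coordinates with $\bm m=1$ one has $u_j=x_j=v_j$, and on those with $\bm m=0$ one has $u_j=z_j$; hence $\bm u=\bm v\odot\bm m+\bm z\odot(1-\bm m)$, i.e.\ $\bm u$ is $\bm v$ with its missing coordinates overwritten by the noise $\bm z^{(1-m)}$. Running the same computation on the second pass gives $\hat{\bm u}=\hat{\bm v}\odot\bm n+\bm z^{(1-n)}$, because $\hat{\bm v}\odot\bm n=\bm v^{(n)}=\hat{\bm u}\odot\bm n$. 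Consequently both pairs are images of a ``data / mask / complement-noise'' triple under the one continuous map
\begin{equation*}
\Theta(\bm w,\bm b,\bm c)=\bigl(\,\bm w\odot\bm b+\bm c\odot(1-\bm b)\,,\ \bm w\,\bigr),
\end{equation*}
namely $(\bm u,\bm v)=\Theta(\bm v,\bm m,\bm z^{(1-m)})$ and $(\hat{\bm u},\hat{\bm v})=\Theta(\hat{\bm v},\bm n,\bm z^{(1-n)})$. Continuity of $\Theta$ (and, below, of $G$ and $G^{-1}$) is inherited from the continuity of the network $g$, just as in Lemmas~\ref{gnonlinear}--\ref{Ginvertible}.

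The second step is to feed into $\Theta$ the ingredients that already converge or agree in law. By hypothesis $p_{\hat{\bm v}}\to p_{\bm v}$; the reshuffled mask satisfies $\bm n\sim p_m$, so $\bm n$ and $\bm m$ have the same law, and $\bm z^{(1-n)}$ on the complement of $\bm n$ has the same law as $\bm z^{(1-m)}$ on the complement of $\bm m$. Assembling these into the triples $(\hat{\bm v},\bm n,\bm z^{(1-n)})$ and $(\bm v,\bm m,\bm z^{(1-m)})$ and pushing forward by $\Theta$ then yields $p_{\hat{\bm u},\hat{\bm v}}\to p_{\bm u,\bm v}$, which is the assertion. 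As a cross-check I would repackage this through Lemma~\ref{Ginvertible}: since $G$ is an a.e.\ bijection with continuous inverse, weak convergence transports both ways along $G$, so it suffices to prove the pre-images $(\bm v^{(n)},\bm n,\bm z^{(1-n)})$ converge in law to $(\bm x^{(m)},\bm m,\bm z^{(1-m)})$, for which one again uses $\bm v^{(n)}=\hat{\bm v}\odot\bm n$ together with the matching laws of the mask and the complement-noise.

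The delicate point, and the step I expect to carry the real weight, is that within each triple the three entries are \emph{not} independent: the data entry ($\bm v$, resp.\ $\hat{\bm v}$) is itself a function of its mask and noise through $g$, so the joint law does not factor and one cannot simply multiply three convergent marginals to obtain joint convergence. Making the push-forward step legitimate therefore comes down to comparing the conditional law of $(\bm m,\bm z^{(1-m)})$ given $\bm v$ against that of $(\bm n,\bm z^{(1-n)})$ given $\hat{\bm v}$, and this is where the design choice that $\bm n$ and $\bm z^{(1-n)}$ are drawn afresh, independently of the first pass, is doing the work. Concretely I would condition on the event $\{\bm n=\bm b\}$ for a fixed binary pattern $\bm b$ with $p_m(\bm b)>0$, reduce $\Theta(\,\cdot\,,\bm b,\,\cdot\,)$ to the elementary ``copy $\bm b$-coordinates, append fresh i.i.d.\ noise on the rest'' map, and recover the joint statement block-by-block from the given marginal convergence $p_{\hat{\bm v}}\to p_{\bm v}$ (making explicit, if needed, whatever mild assumption on the missingness mechanism — e.g.\ independence of $\bm x$ and $\bm m$ — is used to line the two conditional laws up). That conditioning argument is the one part I would write out in full; the rest is bookkeeping.
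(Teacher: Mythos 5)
Your argument is essentially the paper's own proof: your map $\Theta$ plays exactly the role of the paper's relation $\gamma$ (both encode the identity $\bm u=\bm v\odot\bm m+\bm z^{(1-m)}$ and its second-pass analogue $\hat{\bm u}=\hat{\bm v}\odot\bm n+\bm z^{(1-n)}$), and the conclusion is reached the same way, by pushing the triple $(\hat{\bm v},\bm n,\bm z^{(1-n)})$ forward through this common deterministic relation using $\bm n\sim p_m$ and identically distributed fresh noise. The ``delicate point'' you flag --- that the marginal convergence $p_{\hat{\bm v}}\to p_{\bm v}$ does not by itself yield convergence of the joint law of the triple, since the data entry is not independent of its own mask and noise --- is genuine, but it is precisely the step the paper's proof also passes over without elaboration, so your sketch follows the same route and is, if anything, more explicit about where the remaining work lies.
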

\begin{proof}
	As shown in Fig. (\ref{fig.genimp}), the generative imputer preserves the observed values of data, i.e. $\bm x^{(m)}=\bm v^{(m)}$ and $\bm v^{(n)}=\hat{\bm v}^{(n)}$. So Eq. (\ref{imputer2}) implies
	\begin{equation}\label{Lem3_1}
		\begin{split}
			p_{\bm u,\bm v} & = p_{\bm v^{(m)}+\bm z^{(1-m)},\bm v } \\
			p_{\hat{\bm u},\hat{\bm v}} & = p_{\hat{\bm v}^{(n)}+\bm z^{(1-n)},\hat{\bm v} }
		\end{split}
	\end{equation}
	Also, let $\gamma(\bm v, \bm m, \bm z) = \bm v - \bm v^{(m)} - g(\bm v^{(m)} + \bm z^{(1-m)}) = \bm 0$ which gives $\gamma(\hat{\bm v}, \bm n, \bm z)= \bm 0$
	because the same generative model $G$ (and $g$) is used in the iterative imputation.
	Therefore, for the same relation $\gamma$, assuming $\bm n\sim p_m$ and $p_{\hat{\bm v}}\rightarrow p_{\bm v}$, it is concluded that $p_{\hat{\bm u},\hat{\bm v}}\rightarrow p_{\bm u,\bm v}$.
\end{proof}

\begin{corollary}\label{corollary}
	$p_{\hat{\bm v}}\rightarrow p_{\bm v}$ gives $p_{\bm v^{(n)}} \rightarrow p_{\bm x^{(m)}}$.
\end{corollary}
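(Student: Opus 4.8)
The plan is to obtain this corollary as an immediate consequence of Lemma~\ref{joint} and Theorem~\ref{theo1}, chained together; essentially all of the substantive work has already been carried out in establishing the invertibility of the generative imputer (Lemma~\ref{Ginvertible}) and the a.e.\ recovery of the mask (Lemma~\ref{gnonlinear}), so the corollary itself is just a composition of implications.

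First I would invoke Lemma~\ref{joint}: starting from the GAN-convergence hypothesis $p_{\hat{\bm v}}\rightarrow p_{\bm v}$ together with the standing assumption $\bm n\sim p_m$, this lifts the convergence of the marginal laws of the re-imputed and imputed vectors to the convergence of the \emph{joint} laws that also include the auxiliary noise, i.e.\ $p_{\hat{\bm u},\hat{\bm v}}\rightarrow p_{\bm u,\bm v}$. The reason this lift is valid is that $\bm u$ (resp.\ $\hat{\bm u}$) is a fixed measurable function of $(\bm v,\bm m,\bm z)$ (resp.\ $(\hat{\bm v},\bm n,\bm z)$) through the single relation $\gamma$ of Eq.~(\ref{imputer2}), since the identical learned map $g$ is reused in the second-hand imputation.

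Next I would apply Theorem~\ref{theo1}: the convergence $p_{\hat{\bm u},\hat{\bm v}}\rightarrow p_{\bm u,\bm v}$ is exactly $p_{G(\bm v^{(n)},\bm n,\bm z^{(1-n)})}\rightarrow p_{G(\bm x^{(m)},\bm m,\bm z^{(1-m)})}$, and by Lemma~\ref{Ginvertible} the map $G$ is invertible (with $\bm m$ identified a.e.\ as $\mathds{1}_{(\bm u=\bm v)}$ via Lemma~\ref{gnonlinear}), so transporting convergence in distribution through $G^{-1}$ yields $p_{\bm v^{(n)},\bm n,\bm z^{(1-n)}}\rightarrow p_{\bm x^{(m)},\bm m,\bm z^{(1-m)}}$; passing to the first marginal then gives the claim $p_{\bm v^{(n)}}\rightarrow p_{\bm x^{(m)}}$.

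The main obstacle is not in the chaining itself but in verifying that the hypotheses of the two intermediate results hold simultaneously: one must keep careful track of the requirement $\bm n\sim p_m$, so that the second imputation is structurally the same problem as the first, and of the a.e.\ identification of the mask from $(\bm u,\bm v)$, on which the invertibility — and hence the legitimacy of pushing convergence in distribution through $G^{-1}$ — rests. Given Lemmas~\ref{gnonlinear}, \ref{Ginvertible}, \ref{joint} and Theorem~\ref{theo1}, no further estimates are needed and the proof is a one-line composition.
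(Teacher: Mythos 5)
Your proposal is correct and matches the paper's own proof: the corollary is obtained exactly by chaining Lemma~\ref{joint} (lifting $p_{\hat{\bm v}}\rightarrow p_{\bm v}$ to $p_{\hat{\bm u},\hat{\bm v}}\rightarrow p_{\bm u,\bm v}$) with Theorem~\ref{theo1}. Your additional remarks on tracking $\bm n\sim p_m$ and the a.e.\ mask recovery are consistent with how those hypotheses are used in the supporting lemmas.
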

\begin{proof}
	Lemma \ref{joint} states $p_{\hat{\bm u},\hat{\bm v}}\rightarrow p_{\bm u,\bm v}$ which according to Theorem \ref{theo1} is enough to have
	$p_{\bm v^{(n)}} \rightarrow p_{\bm x^{(m)}}$.
\end{proof}

The GAN architecture described in Fig.~\ref{fig.IGANI} and implemented in Algorithm (1) imputes the incomplete data $\bm{x}$, provided that GAN gives $p_{\hat{\bm v}} \rightarrow p_{\bm v}$ which according to Corollary \ref{corollary} means $p_{\bm v^{(n)}} \rightarrow p_{\bm x^{(m)}}$. \\

\begin{algorithm}
	Set $N_E$ (number of epochs)\;
	Set $N_{DU}$ (number of discriminator updates)\;
	\While{epoch $\leq N_E$}{
		\For{$(\bm{x},\bm{m}) \in (\bm{X},\bm{M})$}{
			Sample noise $\bm{z}\sim p_z$\;
			\For{$iter \leq N_{DU}$}{
				$(\bm{u},\bm{v}) \gets G(\bm{x},\bm{m},\bm{z})$\;
				$\bm{n} \gets $ a random shuffling of $\bm{m}$\;
				$(\bm{\hat{u}},\bm{\hat{v}}) \gets G(\bm{v},\bm{n},\bm{z})$\;
				Update discriminator $D$ by minimizing the loss $J^{(D)}_W$ in Eqs.~\ref{IGANIlosses}\;
			}
			Update Generator $G$ by minimizing the loss $J^{(G)}_W$ in  Eqs.~\ref{IGANIlosses}.\;
		}
	}
	\caption{Training of IGANI}
	\algorithmfootnote{Training parameters for this paper: learning rate = 1e-4, $N_E = 200$, $N_{DU} = 30 + [\text{epoch}/10]$ }
\end{algorithm}

\section{Experiments}\label{sec.results}
\subsection{Guangzhou (China) Speed Data}
\subsubsection{Data description}
The dataset represents the speed of 214 road segments in Guangzhou (China) collected over two months (from August 1, 2016 to September 30, 2016) at 10-minute time steps. \citep{xinyu_chen_2018_1205229}.
The data is recommended by the providers for data imputation, traffic prediction and pattern discovery.
The spatial and temporal window of the data, as mentioned above, implies a structure of three-dimensional tensor of size $61\times 144 \times 214$, whose dimensions are  day, time, and road segment, respectively.
For learning, the dataset is reshaped as a $8784 \times 214 $ tensor, meaning that the sample size is $8784$.
The original missing rate of data is 1.29\%, which is almost negligible for imputation purposes, and the missing data is calculated using an iterative imputer (that estimates each feature as a function of all the others) before creating a ``complete" reference data.

The described dataset is divided into three portions: (i) 10\% for training the imputers, (ii) 80\% for training the short-term traffic prediction models, and (iii) 10\% for testing the prediction models.
The imputation accuracy is tested on portions (ii) and (iii).

\subsubsection{Architecture}

Network dimensions of the architectures used in the experiments is shown in table \ref{table}.
To have a fair comparison of imputer performance, the same capacity of it is used for them; i.e. $G$ for IGANI and GAIN and $G_i$ for MisGAN have the same layer and dimensions.

\begin{table}
	\caption{Dimension of layers for different architectures}
	\label{table}
	\centering
	\setlength{\tabcolsep}{3pt}
	\begin{tabular}{p{100pt}p{15pt}p{170pt}}
		\hline
		Architecture& 
		DNN& 
		Layer dimensions \\
		\hline
		
		IGANI& 
		$G$ & 
		Dense (214$\times$512), ReLU, Dropout ($p=0.05$), Dense (512$\times$512), ReLU, Dropout ($p=0.05$), and Dense (512$\times$214) \\
		& 
		$D$ & 
		Dense (214$\times$256), ReLU, Dense (256$\times$256), ReLU, and Dense (256$\times$214) \\
		
		GAIN& 
		$G$ & 
		Dense (214$\times$512), ReLU, Dropout ($p=0.05$), Dense (512$\times$512), ReLU, Dropout ($p=0.05$), and Dense (512$\times$214) \\
		& 
		$D$ & 
		Dense ((2$\times$214)$\times$256), ReLU, Dense (256$\times$256), ReLU, and Dense (256$\times$214) \\
		
		MisGAN& 
		$G_m$ & 
		Dense (214$\times$256), ReLU, Dense (256$\times$256), ReLU, and Dense (256$\times$214) \\		
		& 
		$D_m$ & 
		Dense (214$\times$256), ReLU, Dense (256$\times$256), ReLU, and Dense (256$\times$214) \\		
		&
		$G_x$ & 
		Dense (214$\times$256), ReLU, Dense (256$\times$256), ReLU, and Dense (256$\times$214) \\		
		& 
		$D_x$ & 
		Dense (214$\times$256), ReLU, Dense (256$\times$256), ReLU, and Dense (256$\times$214) \\		
		&
		$G_i$ & 
		Dense (214$\times$512), ReLU, Dropout ($p=0.05$), Dense (512$\times$512), ReLU, Dropout ($p=0.05$), and Dense (512$\times$214) \\
		& 
		$D_i$ & 
		Dense (214$\times$256), ReLU, Dense (256$\times$256), ReLU, and Dense (256$\times$214) \\

		Short-term prediction &
		$Net$ &
		Dense (214$\times$424), ReLU, Dropout ($p=0.05$), Dense (424$\times$424), ReLU, Dropout ($p=0.05$), and Dense (424$\times$214) \\
		\hline
		\multicolumn{3}{p{295pt}}{Note: The dimension of inputs for Guangzhou data is always 214 for all the networks. The dimension of Portland-Vancouver data (discussed in Section~\ref{sec.portland}) is  480, which replaces 214 in this table.}\\
		\hline
	\end{tabular}
	\label{tab1}
\end{table}

\subsubsection{Imputation of missing traffic data}

The proposed method in this work, IGANI, is applied to the mentioned data with different missing rates.
The missing mechanism is MCAR which is the common case for incomplete traffic data.
The performance of IGANI with respect to test accuracy is compared with two recent GAN-based imputation methods, i.e. GAIN and MisGAN, and the results are shown in Fig.~\ref{fig.maeimp}. For each of the three approaches, we ran 5 separate trainings, and the mean and standard deviation results in Fig.~\ref{fig.maeimp} are calculated using 5 separate tests on the 5 trained models.

As can be seen, the mean absolute error (MAE) of imputation by IGANI, compared with other methods, is  lower for all the missing rates.
Also, GAIN and MisGAN are less accurate than the baseline (mean imputation) for missing rates higher than 70\%, while IGANI always outperforms the baseline.
Superiority of IGANI is also visible for an individual sample of data; see Fig.~\ref{fig.maeimpinstance}.
The outperformance of IGANI may be justified as follows.
MisGAN architecture, as can be seen from Fig. \ref{fig.MisGAN}, is complicated with three pairs of generator-discriminator trained together at every epoch.
The discriminator of imputer ($D_i$) treats the output of data generator ($G_x$) as the real data ($\hat{\bm x}$), which is far from being real in the early stages of training and mislead the imputer.
Such an adverse early-stage effect of NNs on each-other, does not occur in GAIN and IGANI where both have only one pair of generator and discriminator.
However, the mask hint as an input to the discriminator of GAIN (see Fig. \ref{fig.gain}) is spoiler in the sense that the discriminator has an easier job to distinguish the observed data from the missing one.
The effect of spoiling the discriminator becomes bold for higher missing rate.
The discriminator of IGANI does not benefit from such a hint while training, instead it is trained to recognize which data is imputed twice without any hint of the masks.
Therefore, the superiority of IGANI compared with MisGAN and GAIN is attributed to its simple architecture and mature discriminator.

\begin{figure}[h]
	\centering
	\includegraphics[width=3.5in]{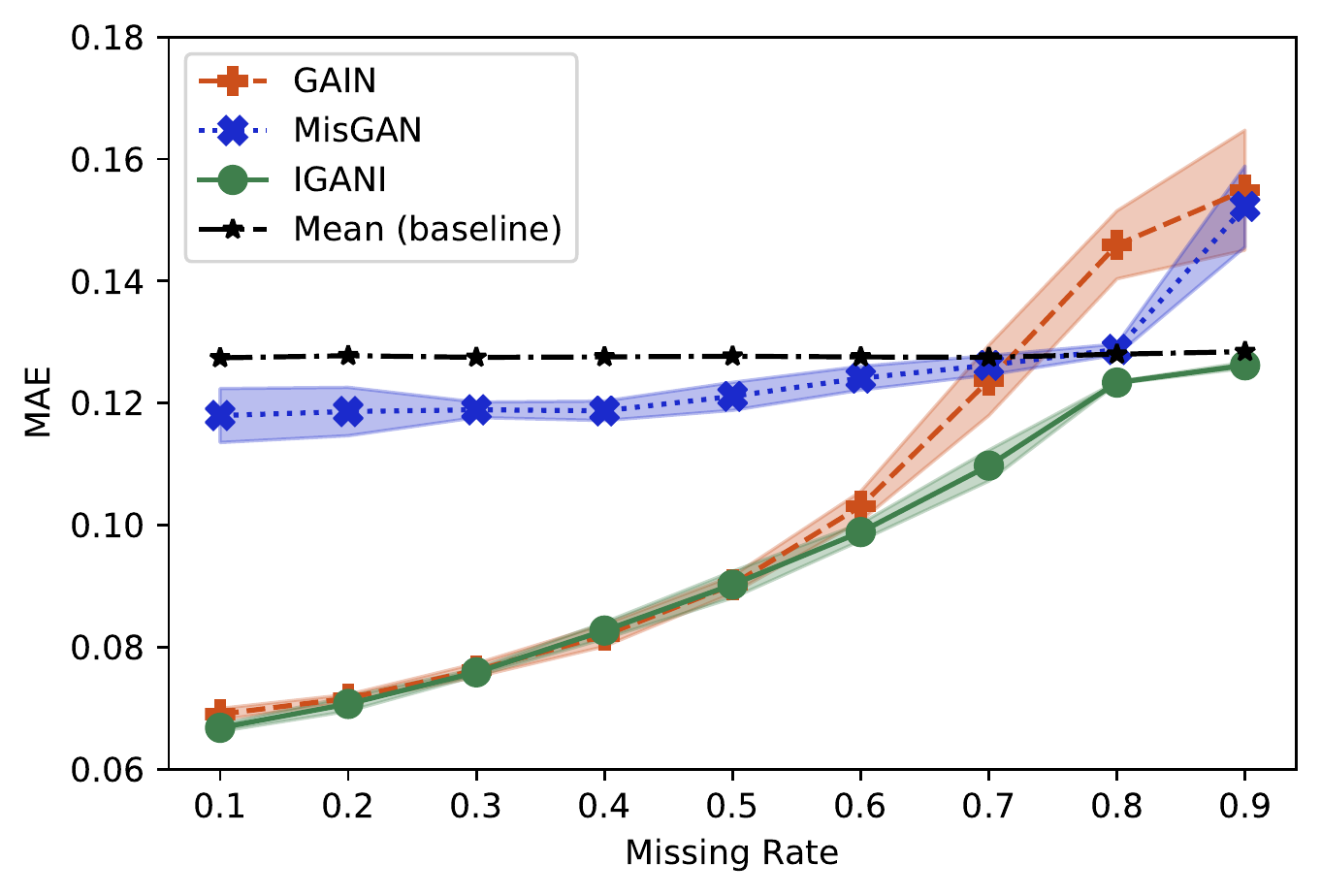}
	\caption{MAE of normalized imputed values by GAIN, MisGAN, and IGANI where shaded area represents $\pm 3\sigma$ for 5 models. The baseline, mean imputation, shown by black star marker, does not outperform IGANI even for higher missing rates.}\label{fig.maeimp}
\end{figure}

\begin{figure}[h]
	\centering
	\includegraphics[width =\textwidth]{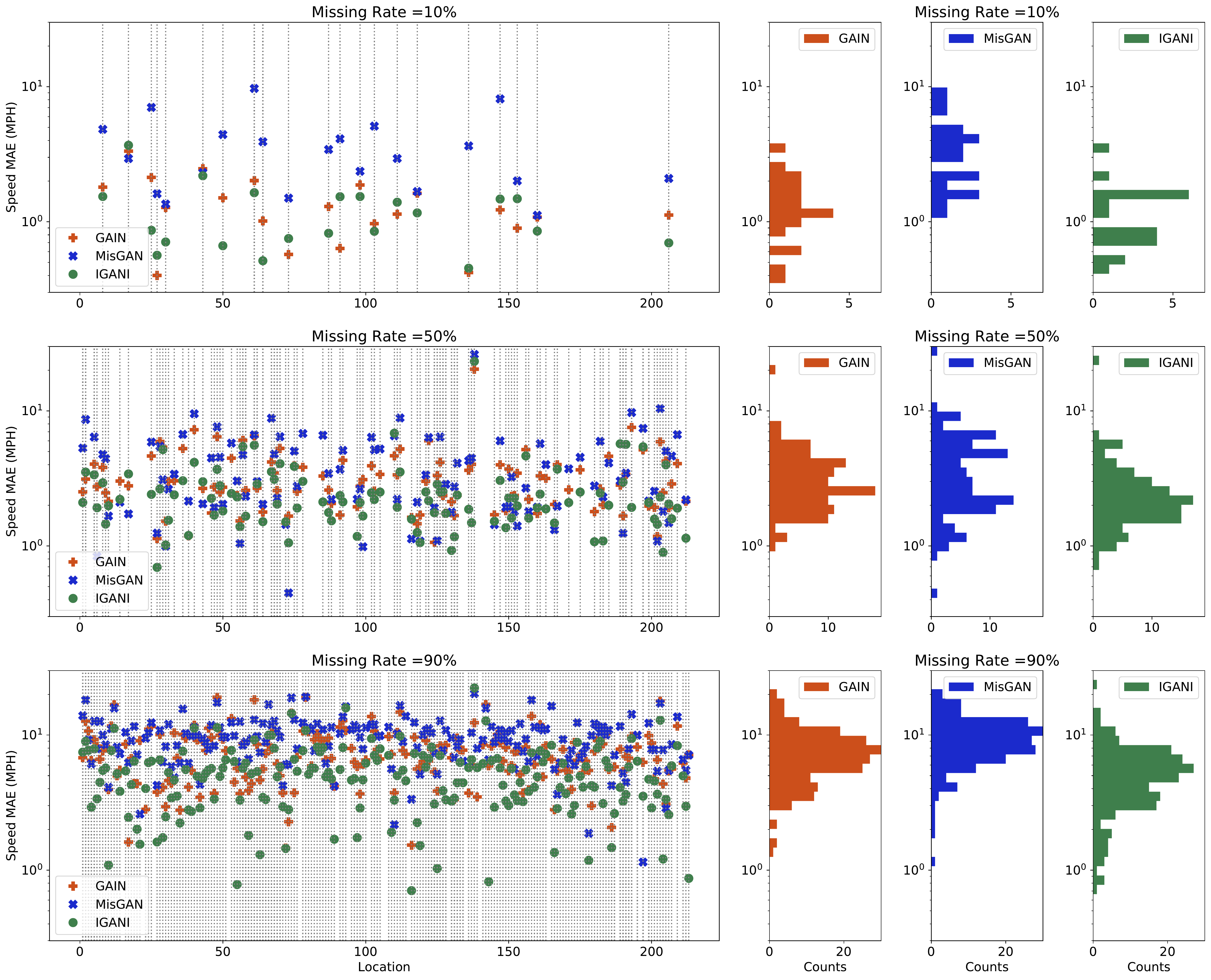}
	\caption{Logarithmic MAE of imputations by GAIN, MisGAN, and IGANI for an individual sample. IGANI outperforms GAIN and MisGAN especially for higher missing rates. Vertical lines on left subplots represent missing value indices. The histograms of MAE for imputed values are also plotted.}\label{fig.maeimpinstance}
\end{figure}

\subsubsection{Short-term traffic prediction using imputed data}
In addition to assessing the accuracy of data imputation, it is also important to evaluate how imputation methods, which are unsupervised learning, perform when used in subsequent analyses or predictions. In this study, we choose short-term traffic prediction using supervised learning-based neural network using imputed data.
In particular, the time interval between two subsequent data samples is  considered to be 10 minutes and the aim is to predict an instance (the speed)  from its preceding one using a fully connected neural network model.
The authors admit that more advanced methods are available for predicting temporal data (like recurrent neural networks (RNN) and its variants, e.g. long short-term memory (LSTM) and gated recurrent unit (GRU)). But, the choice of prediction model is beyond the aim of this work, and our aim is to assess how the errors cause by different imputation methods  transcend into \emph{a} subsequent prediction task. 

\begin{figure*}[h]
	\centering
	\includegraphics[width=\textwidth]{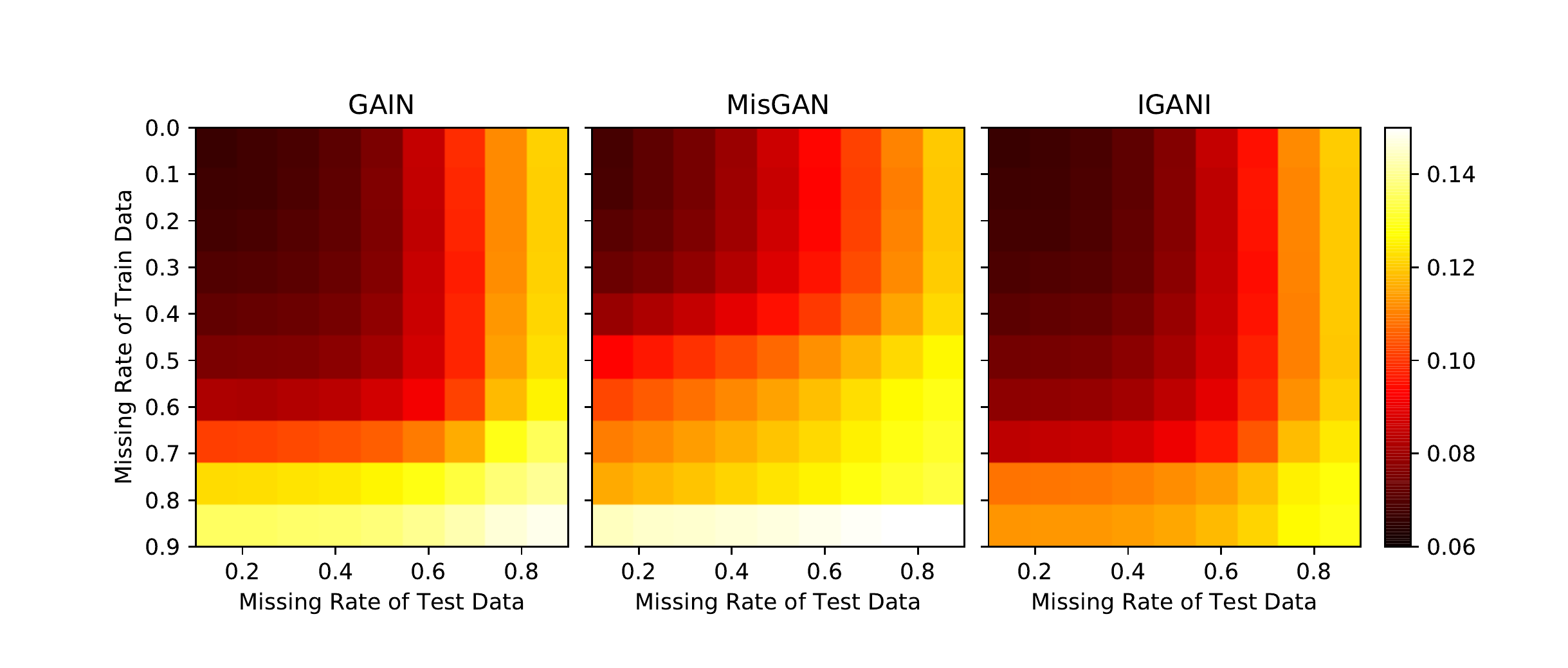}
	\caption{MAE of normalized short-term traffic predictions using missing data imputed by GAIN, MisGAN, and IGANI, averaged over 5 trained imputers for each architecture.}
	\label{fig.maepredmatrix}
\end{figure*}

\begin{figure}[h]
	\centering
	\includegraphics[width=3.5in]{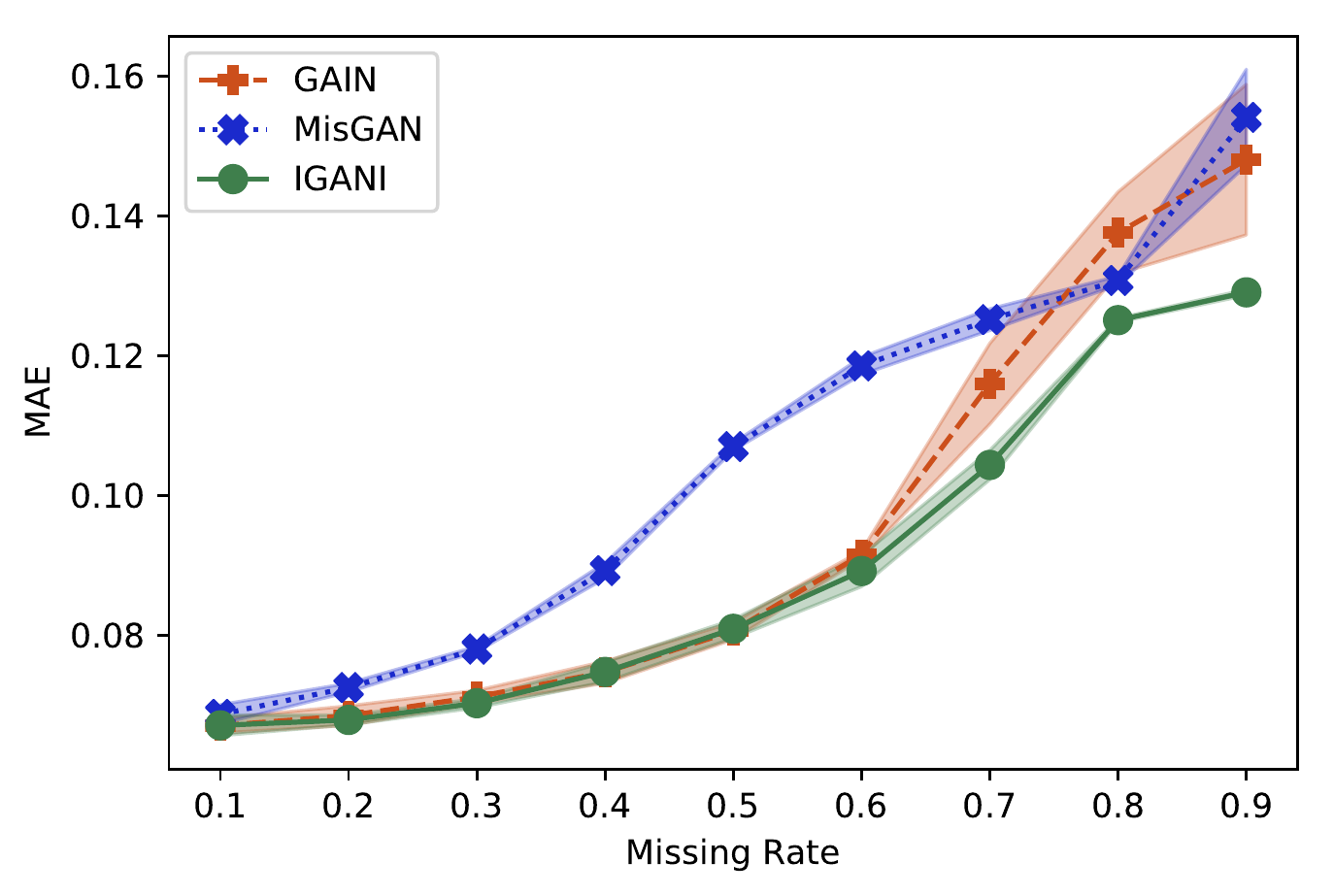}
	\caption{MAE of normalized short-term speed predictions based on imputed data by GAIN, MisGAN, and IGANI. Shaded area represents $\pm 3\sigma$ for the 5  trained imputers. Training and testing data have the same missing rate. }\label{fig.maepred}
\end{figure}

In this work, short-term prediction models are trained and tested for different missing rates. 
These rates range from 0 to 0.9 for training and from 0.1 to 0.9 for testing models.
The imputation methods are GAIN, MisGAN, and IGANI which implies a total of $10\times 9 \times 3=270$ models.
The MAE of testing models for predicting normalized data is observed for all cases in Figs. \ref{fig.maepredmatrix} and \ref{fig.maepred}.
As can be seen from these results, the superiority of IGANI is more significant for higher missing rates where both GAIN and MisGAN lose robustness in supervised learning-based tasks that use imputed data.
Generally, IGANI is more accurate and stable compared with previous GAN-based imputation methods and is strictly recommended for missing rates higher than 50\%.

\subsection{Portland-Vancouver metropolitan region: volume, occupancy, and speed}\label{sec.portland}

In the previous experiment, a single variable of traffic condition, i.e. speed, was imputed.
Nowadays, traffic condition is measured by detectors which are capable of capturing more than one variable.
Dual loop detectors are a common type of traffic detector which measure volume, occupancy and speed data.
While dysfunctions such as communication errors or hardware issues may lead to missing data from dual loop detectors, faults such as chattering, pulse breakup, and hanging on/off may produce low fidelity data \citep{ariannezhad2020large}.
As all variable are not available and valid at the same time and location, imputation of multi-variable traffic data is of critical importance.

In this experiment, a dateset composed of volume, occupancy and speed is used to evaluate our proposed method and other GAN-based imputers. 
We considered different missing rates for volume, occupancy and speed in order to study the mutual  effect of measured variables on the imputation accuracy of the missing one.

\subsubsection{Data description}
The data used in this section is collected from the dual loop detectors of highways in Portland-Vancouver Metropolitan region \citep{portland}.
The time span includes the first half of January 2021 with a time resolution of 15 minutes for detector IDs between 5000 and 6000.
Detectors and time spots which are entirely missing are removed from data, after which the original missing rates for volume, occupancy, and speed were found to be 0, 0, and 0.5\%, respectively.
Similarly to the previous section, the small number of missing speed data are calculated using iterative imputer to form the ``complete" data. This data consists of three matrices of the size $1500 \times 160$ for volume, occupancy, and speed measured at 160 locations (detectors) and 1500 time points.
Out of the total data length (1500 time points), 85\% of the data is used to train the imputer and 15\% is used to test it.

\subsubsection{Architectures}
The imputation is performed with the same DNN layers,  as listed in Table (\ref{table}), to evaluate the performance of architectures under fixed model capacity when only the input dimension increases.
The only difference is that in this case study, the dimension of the input  is 480, instead of 214.

\subsubsection{Imputation of missing traffic data}
In this case study, we consider four missing rates of 0.2, 0.4, 0.6 and 0.8 for each of the three variable (i.e. volume, occupancy, and speed). The imputation results for the $4^3$ combinations of these rates are shown in Figs. \ref{fig.vmr} - \ref{fig.smr} for different GAN-based imputation methods.
As can be seen, IGANI significantly outperforms GAIN and MisGAN whose performance are adversely affected in this case where the dimension of input has increased from 214 to 480.
Also, the expected error increase in higher missing rates can hardly be seen in MisGAN results.
The observed  inaccuracy for GAIN and MisGAN models, as well as the erratic error behavior for MisGAN results can be due to the fact that the hint mechanism of GAIN and complexity of MisGAN architecture have become inefficient now that we are applying them, with unchanged architecture and model capacity, to a new dataset with a different dimension. This shows that IGANI architecture is more robust and can show similar performance levels for a larger data set, while the architecgture of GAIN and Mis GAN  may need to be re-designed.

The results obtained from the IGANI imputation show that the change in the missing rates of the three variables affects the imputation accuracy of  other variables.
This is more obvious in Figs. \ref{fig.vmr} and \ref{fig.omr}, which show that the imputation accuracy of volume and occupancy depends on the missing rates of one another. This effect is less apparent in \ref{fig.smr}, however, where the speed imputation accuracy is not significantly  influenced by the availability of volume and occupancy data.
Such a behavior is probably because speed variation is not significant for this set of highway data, which makes the speed imputation an easier task, regardless of how incomplete the volume and occupancy data are.

\begin{figure*}[h]
	\centering
	\includegraphics[width=\textwidth,height=11cm,keepaspectratio]{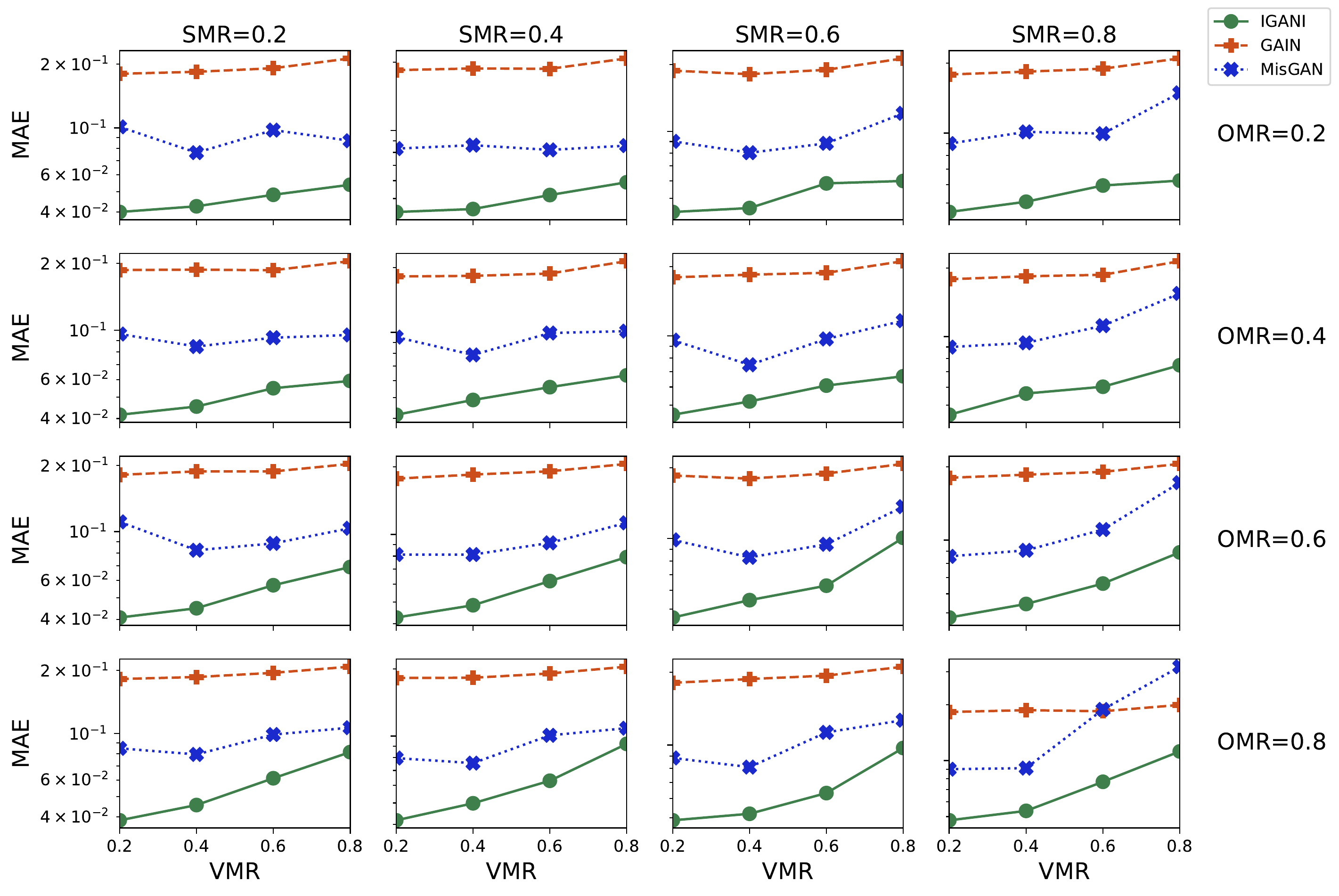}
	\caption{MAE of normalized imputed \textit{volume} data by IGANI, GAIN, and MisGAN. VMR, OMR, and SMR denote volume, occupancy, and speed missing rates, respectively.}
	\label{fig.vmr}
\end{figure*}

\begin{figure*}[h]
	\centering
	\includegraphics[width=\textwidth,height=11cm,keepaspectratio]{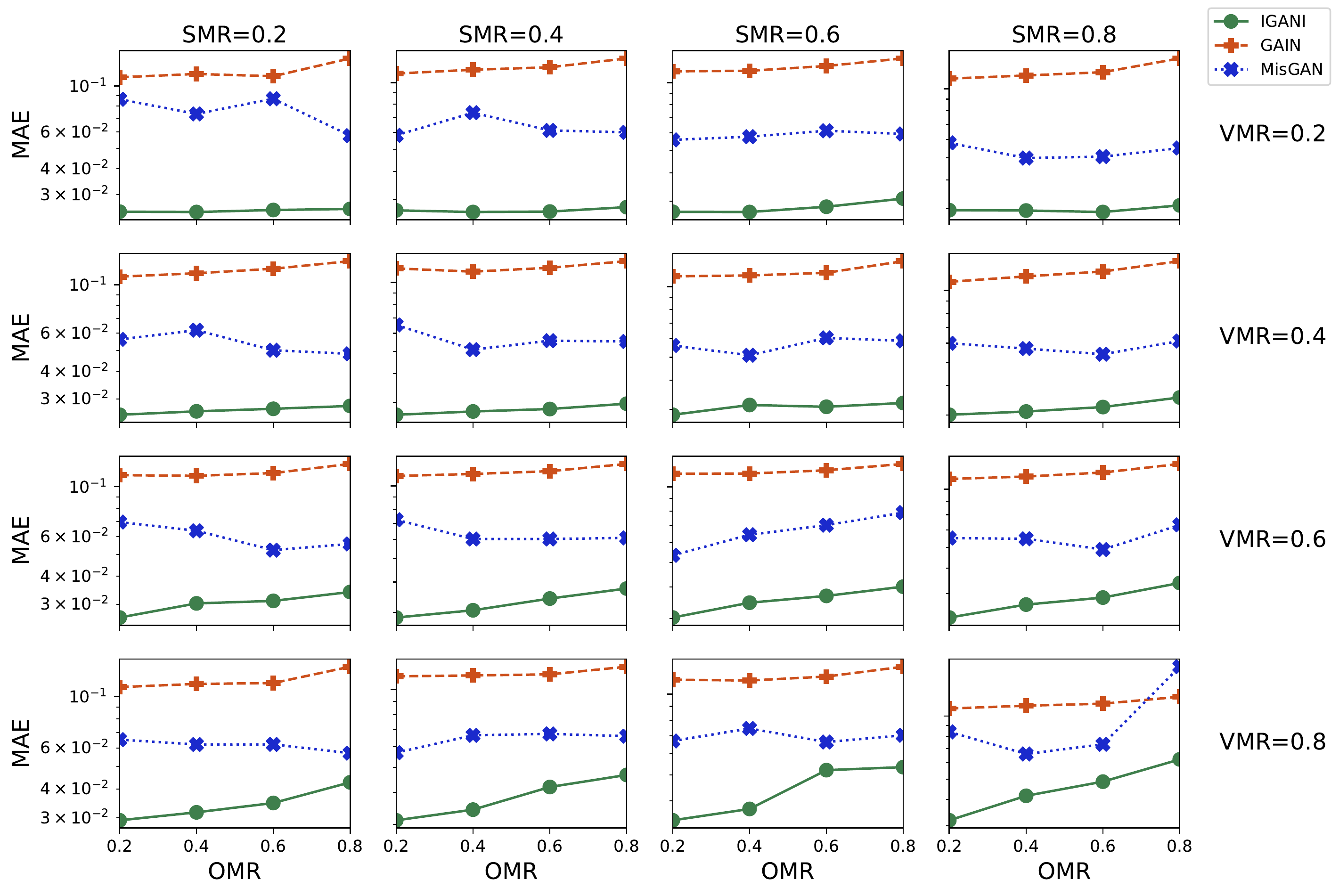}
	\caption{MAE of normalized imputed \textit{occupancy} data by IGANI, GAIN, and MisGAN. VMR, OMR, and SMR denote volume, occupancy, and speed missing rates, respectively.}
	\label{fig.omr}
\end{figure*}

\begin{figure*}[h]
	\centering
	\includegraphics[width=\textwidth,height=11cm,keepaspectratio]{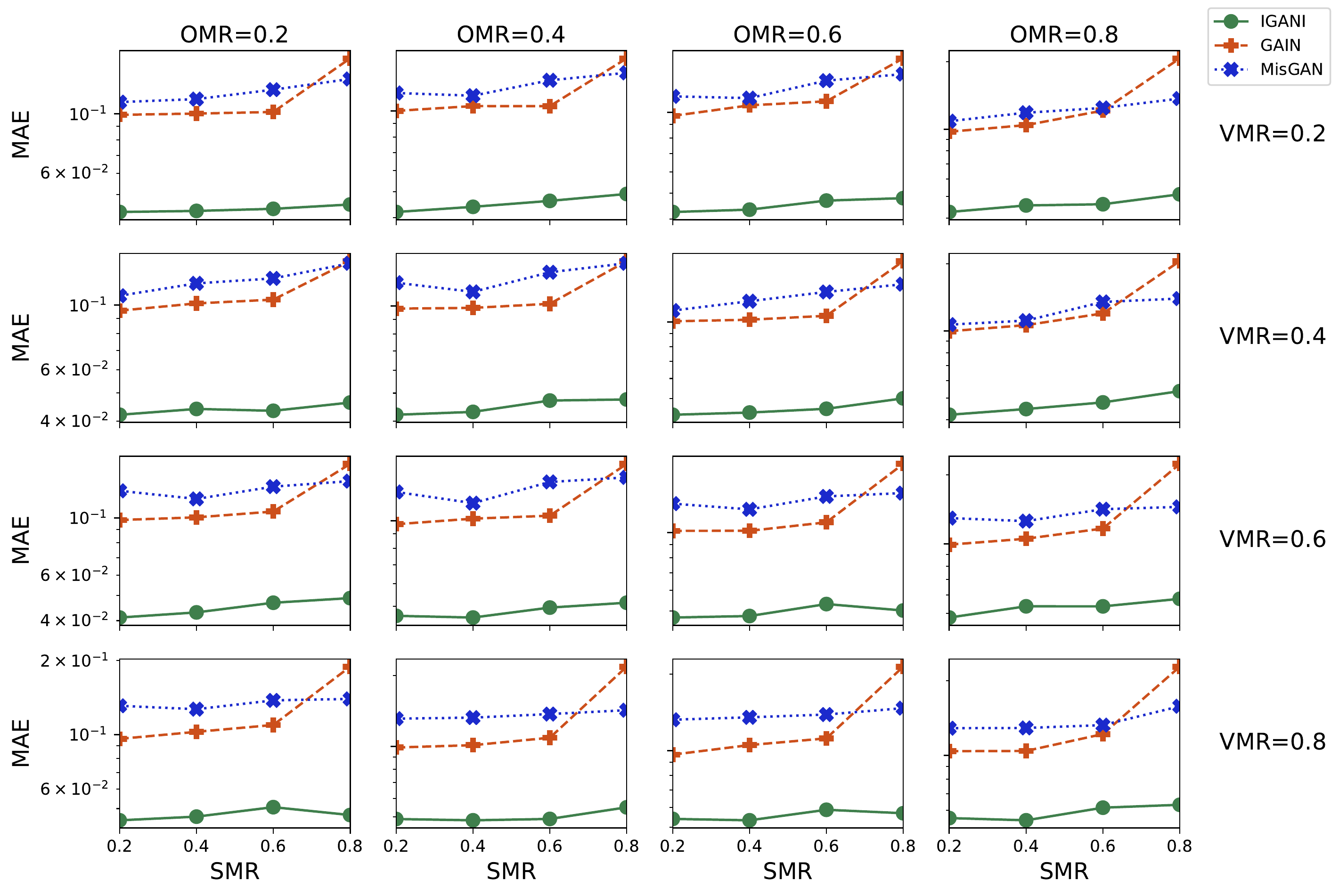}
	\caption{MAE of normalized imputed \textit{speed} data by IGANI, GAIN, and MisGAN. VMR, OMR, and SMR denote volume, occupancy, and speed missing rates, respectively.}
	\label{fig.smr}
\end{figure*}

\section{Conclusion}

In this work, a new GAN architecture, named IGANI, is introduced for data imputation and its performance is evaluated on imputation of missing traffic data and also short-term traffic prediction. It is shown that IGANI significantly outperforms the previous GAN-based imputation architectures (like GAIN and MisGAN) accuracy. It is also shown that when IGANI-imputed data is used in a supervised learning framework to train  short-term traffic predictions, the prediction accuracy is higher compared to the cases where GAIN- or MisGAN-imputed data is used.
The proposed architecture is especially instrumental for the imputation of big data, such as traffic data generated in transportation systems.
This is because  IGANI imputes traffic data with a higher accuracy compared with other GAN-based methods at various missing rates, and as opposed to clustering-based imputation methods (like KNN) does not require searching within a large pool of data to find closest neighbors for imputation.
In case of multi-variable data, like those obtained from traffic loop detectors, IGANI outperforms GAIN and MisGAN, with respect to both accuracy and behavior.
This outperformance is demonstrated through a dataset of volume, occupancy, and speed which studies the effect of missing rate of one variable on the imputation accuracy of the other one.

\section*{Acknowledgment}
This material is based in part upon work supported by the 
National Science Foundation under Grant No.  CMMI-1752302 and USDOT under Grant No. 69A3551747105. 

\bibliographystyle{unsrtnat}
\bibliography{references}  






\end{document}